\def\eqref#1{equation~\ref{#1}}
\def\1{\bm{1}}
\newcommand{\train}{\mathcal{D}}
\newcommand{\valid}{\mathcal{D}_{\mathrm{val}}}
\newcommand{\thetahat}{\hat{\bm\theta}}
\newcommand{\M}[1]{M_{#1}}
\newcommand{\w}{\mathbf{w}}
\newcommand{\real}{\mathbb{R}}
\newcommand{\one}{\mathbf{1}}
\newcommand{\ind}[1]{\mathbbm{1}[#1]}
\newcommand{\Mlin}[1]{\overline{M}_{#1}}
\newcommand{\model}[1]{h_{\bm{\theta}}(\mathbf{x}_{#1})}
\newcommand{\modelrv}{h_{\bm{\theta}}(X)}
\newcommand{\trainedmodel}{h_{\hat{\bm\theta}}(\mathbf{x})}
\newcommand{\s}{\mathbf{s}}
\newcommand{\define}{\overset{\text{def}}{=}} 
\newcommand{\loss}{L(\bm \theta)}
\newcommand{\influence}{\mathcal{I}}
\newcommand{\ddp}[2]{\M{#1}^{\Delta\text{DP}}(#2)}
\newcommand{\deo}[2]{\M{#1}^{\Delta\text{EO}}(#2)}
\newcommand{\thetafair}{\hat{\bm\theta}_{\text{fair}}}
\newcommand{\wfair}{\w_{\text{fair}}}
\newcommand{\I}{\mathcal{I}_{b,n}}
\def\vk{{\bm{k}}}
\def\vo{{\bm{o}}}
\def\vr{{\bm{r}}}
\DeclareMathAlphabet{\mathsfit}{\encodingdefault}{\sfdefault}{m}{sl}
\SetMathAlphabet{\mathsfit}{bold}{\encodingdefault}{\sfdefault}{bx}{n}
\newcommand{\indep}{\perp \!\!\! \perp}
\newcommand \mcO{\mathcal{O}}
\newcommand{\fairlc}{\texttt{BERT\textsubscript{LC}}}
\newcommand{\fairnc}{\texttt{BERT\textsubscript{NC}}}
\newcommand{\fairtt}[1]{\texttt{BERT\textsubscript{TT=#1}}}
\newcommand{\fairttttfive}[1]{\texttt{T5\textsubscript{TT=#1}}}   
\newtheorem{prop}{Proposition}[section]
\NewExpandableDocumentCommand{\fpcompareTF}{mmm}
 {
  \fp_compare:nTF { #1 } { #2 } { #3 }
 }
\definecolor{Gray}{gray}{0.85}
\definecolor{LightCyan}{rgb}{0.88,1,1}
\title{Fair Infinitesimal Jackknife: Mitigating the Influence of Biased Training Data Points Without Refitting}
\author{%
  Prasanna Sattigeri
  \\
  IBM Research\\
  Yorktown Heights, NY \\
  \texttt{psattig@us.ibm.com} \\
  \And
  Soumya Ghosh
  \\
  MIT-IBM Watson AI Lab, IBM Research\\
  Cambridge, MA \\
  \texttt{ghoshso@us.ibm.com} \\
  \And
  Inkit Padhi
  \\
  IBM Research\\
  Yorktown Heights, NY \\
  \texttt{inkit.padhi@ibm.com} \\
  \And
  Pierre Dognin
  \\
  IBM Research\\
  Yorktown Heights, NY \\
  \texttt{pdognin@us.ibm.com} \\
  \And
  Kush R.~Varshney
  \\
  IBM Research\\
  Yorktown Heights, NY \\
  \texttt{krvarshn@us.ibm.com} \\
}
\begin{document}

\maketitle

\begin{abstract}
In consequential decision-making applications, mitigating unwanted biases in machine learning models that yield systematic disadvantage to members of groups delineated by sensitive attributes such as race and gender is one key intervention to strive for equity. Focusing on demographic parity and equality of opportunity, in this paper we propose an algorithm that improves the fairness of a pre-trained classifier by simply dropping carefully selected training data points. We select instances based on their influence on the fairness metric of interest, computed using an infinitesimal jackknife-based approach. The dropping of training points is done in principle, but in practice does not require the model to be refit. Crucially, we find that such an intervention does not substantially reduce the predictive performance of the model but drastically improves the fairness metric. Through careful experiments, we evaluate the effectiveness of the proposed approach on diverse tasks and find that it consistently improves upon existing alternatives. 
\end{abstract}

\section{Introduction}

Among the many possible interventions to improve equity in society (most of them involve structural policy change), bias mitigation algorithms constitute one narrow sliver that has emerged in the machine learning literature to address distributive justice in high-stakes automated decision making. These algorithms may be categorized into pre-processing, in-processing, and post-processing approaches \cite{Varshney2022}. In the case of in-processing algorithms  \cite{kamishima2011fairness}, the bias mitigation intervention occurs at the model training stage. This is usually achieved by minimizing the empirical risk regularized by a fairness metric surrogate that captures the dependence of the prediction and the sensitive attribute. Pre-processing methods typically learn transformations of the data distribution such that they do not contain information about the sensitive attributes \cite{zemel2013learning, louizos2015variational}. Task specific models are then learned from scratch on these debiased representations. Retraining a model from scratch is intractable in many real-world situations for a variety of reasons including policy, cost, and technical feasibility; post-processing approaches are the only viable option in such cases. For example, consider trying to refit large foundation models. Limiting ourselves to notions of \emph{group} fairness such as demographic parity and equality of opportunity, existing post-processing bias mitigation algorithms tend to either randomly or deterministically alter the hard or soft predicted label of individual test data points that have been scored by a model \cite{kamiran2012decision,hardt2016equality,pleiss2017fairness,canetti2019soft,lohia2019bias,wei2020optimized}. 

In this paper, we propose a more ``global'' bias mitigation algorithm. Our procedure alters the entire model without a focus on individual \emph{test} points. Similar to post-processing approaches, our method mitigates pre-trained models without requiring any additional refitting of the model. Unlike standard post-processing approaches, however, our method does require access to the training data. In exchange for this additional requirement, we find that our approach typically substantially outperforms other post-processing techniques and can even augment in-processing approaches for a better fairness/accuracy trade-off. 

\paragraph{Our contributions.} Our first contribution is methodological. We use the notion of influence functions to estimate the ``influence'' of training instances on various group fairness metrics of interest. We then perform post-hoc unfairness mitigation by approximately removing training instances that have a disproportional impact on group (un)fairness. We theoretically analyze the proposed approach and establish conditions under which it provably improves group fairness.   

Next, we observe that influence calculations require the inversion of a Hessian matrix, a prohibitively expensive operation for models with a large number of parameters.  Existing approximations \cite{koh2017understanding,agarwal2017second,shewchuk1994introduction} can either be expensive, inaccurate, or unstable \cite{pezeshkpour2021empirical,basu2020influence,singh2020woodfisher}. We develop \textit{IHVP-WoodFisher}, a WoodFisher~\cite{singh2020woodfisher} based Inverse-Hessian Vector Product (IHVP) scheme for computing the fairness influence score of the training instances that is stable, easy to compute, and does not require constraints, such as restricted Eigenspectrum of the loss curvature, that are hard to satisfy in practice. 

Our final contribution is empirical. First, through careful experiments on tabular data, we show that our approach is effective at reducing group unfairness, is competitive with existing methods, and can even augment the latter to achieve a better fairness/accuracy trade-off. Then, we demonstrate how our approach can be easily adapted to more complex modalities such as natural language and be used for bias mitigation of large pre-trained language models through \textit{prompt-tuning}, a use-case that is likely to become increasingly common with the proliferation of large language models.  

\begin{figure}
    \centering
    \includegraphics[width=\textwidth]{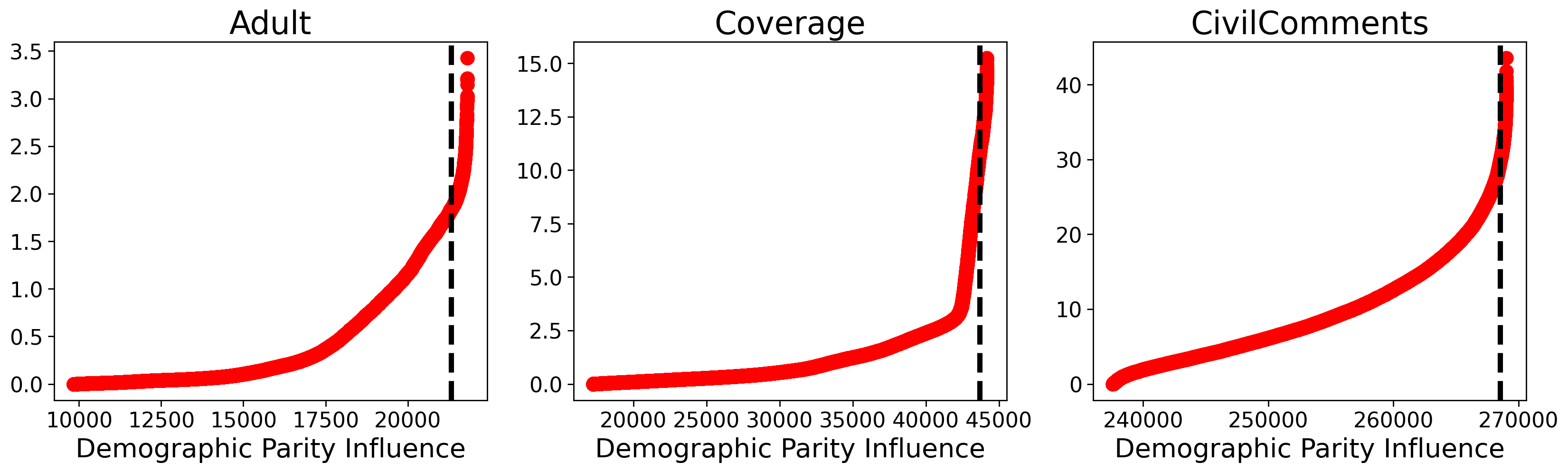}
    \caption{{\textbf{Fairness influence}. We plot the sorted influence scores of training instances on the average demographic parity of a held-out validation set for different datasets considered in this paper. Only training instances with positive influence on demographic parity are plotted. The points to the right of the black line are the $500$ most influential training instances. Models adjusted to mitigate the influence of these instances are substantially more fair. On held-out test sets of the \emph{Adult}, \emph{Coverage}, \emph{CivilComments} datasets, demographic parity improves from $0.18$ to $0.01$, from $0.22$ to $0.01$, and $0.17$ to $0.03$.}}
    \label{fig:inf_scores}
\end{figure}

\section{Background and Related Work}

\subsection{Empirical and Weighted Risk Minimization}
We begin by considering the standard supervised learning setup. Given a dataset $\train = \{\mathbf{z}_{n} = (\mathbf{x}_{n}, y_n)\}_{n=1}^{N}$ of $N$ features ($\mathbf{x}_n \in \real^p$), response pairs ($y_n \in \mathcal Y$), a model $\model{}$ parameterized by a set of parameters $\bm\theta \in \Theta \subseteq \real^D $, and a loss function $\ell : \Theta \times \mathcal{Y} \rightarrow \real$, we minimize the empirical risk,
\begin{equation}
\hat{\bm\theta} = \underset{\bm\theta \in \Theta}{\text{argmin}} \frac{1}{N}\sum_{n=1}^N\ell(y_n, \model{n}),
 \label{eq:erm}
\end{equation}
to arrive at a trained predictor $\trainedmodel$. We will denote $\loss \define \frac{1}{N}\sum_{n=1}^N\ell(y_n, \model{n})$ for notational convenience. Next, consider a weighted risk minimization problem, 
\begin{equation}
    \hat{\bm\theta}(\w) = \underset{\bm\theta \in \Theta}{\text{argmin}} \frac{1}{N}\sum_{n=1}^N w_n\ell(y_n, \model{n}), 
    \label{eq:wrm}
\end{equation} 
that weights the loss at each training instance by a scalar weight $w_n$ and $\w$ denotes the column vector  $[w_1, w_2, \ldots, w_N]^T \in \real^{N}$. Setting all the weights to one, $\one \define [w_1=1, w_2=1, \ldots, w_N=1]^T$, and minimizing the right hand side of \cref{eq:wrm} recovers the  standard empirical risk minimization problem. On the other hand, setting the $n^{\text{th}}$ coordinate to zero recovers the solution to an empirical risk minimization problem after dropping the $n^{\text{th}}$ training instance. As is clear from \cref{eq:wrm} and emphasized by our notation, $\bm{\theta}$ is a function of the weights $\w$. Although we typically do not have a closed form expression for this function, we can form a Taylor approximation to it:
\begin{equation}
    {\bm\theta}(\w) = \hat{\bm\theta} + \nabla_\w {\bm\theta} (\w) \bigg |_{\w = \one} (\w - \one) + \color{gray}{\mathcal{O}((\w - \one)^2)} \color{black},
    \label{eq:foij}
\end{equation}
where $\nabla_\w {\bm\theta} (\w) \in \real^{D \times N}$ is the Jacobian matrix. This first order Taylor approximation is often referred to as the infinitesimal jackknife  approximation \cite{jaeckel1972infinitesimal, efron1981nonparametric}. The coordinate-wise gradient $\frac{d\theta(\w)}{dw_n}|_{w_n=1}$ measures the effect of perturbing the weight of the $n^\text{th}$ data point on $\hat{\bm \theta}$ and is popularly referred to as the influence function~\cite{koh2017understanding}, since it measures the ``influence''  of the $n^\text{th}$ training instance on the model's parameters. When we are at a stationary point of $\loss$, i.e., when $\nabla_{\bm \theta} \loss= 0$, $\loss$ is twice differentiable in $\bm\theta$, then, 
\begin{equation}
\frac{d\theta(\w)}{dw_n} \bigg|_{\w = \one} = - \mathbf{H}^{-1} \mathbf{g}_n,
\end{equation}
where $\mathbf{H} \define \nabla_{\bm\theta}^2 \loss |_{\bm\theta = \hat{\bm\theta}}$, and $\mathbf{g}_n \define \nabla_{\bm \theta} \ell(y_n, \model{n})|_{\bm\theta = \hat{\bm\theta}}$. 
Recent work~\cite{ghosh2020approximate}  has shown that the above expression approximates the gradient well in the vicinity of a stationary point with the accuracy of the approximation deteriorating smoothly with increasing distance from the stationary point. This result justifies the use of influence functions even when stochastic optimization is used for minimizing \cref{eq:erm}. Finally, to measure the influence of a training instance on a differentiable functional, $M$, of $\bm \theta (\w)$, we apply chain rule to arrive at,
\begin{equation}
\influence_{M,n} \define \frac{dM(\bm\theta(\w), \w)}{dw_n} \bigg|_{\w=\one, \bm\theta = \hat{\bm\theta}} = -\nabla_{\bm\theta} M(\bm\theta(\w), \w)\bigg|_{\w=\one, \bm\theta = \hat{\bm\theta}}^T \mathbf{H}^{-1} \mathbf{g}_n,
\label{eq:inf_generic}
\end{equation}
where our notation makes explicit the dependence of $M$ on $\w$.  Recent work has leveraged this machinery to approximate cross-validation~\cite{giordano2019higher, stephenson2019sparse, ghosh2020approximate}, to interpret black-box machine learning models~\cite{koh2017understanding}, and to assess the sensitivity of statistical analyses to training data perturbations~\cite{broderick2020automatic}, among others. Differently from these, we show how this machinery can be leveraged for reducing disparities of pre-trained models across groups. 

\subsection{Fair Classification}
We further assume that for each data instance we have access to a sensitive attribute $\s_n \in [k]$, i.e., $\train = \{\mathbf{z}_{n} = (\mathbf{x}_{n}, s_n, y_n)\}_{n=1}^{N}$, that encodes the protected group membership of the $n^\text{th}$ data instance and that we are interested in binary classification, $\mathcal{Y} = \{1, 0\}$. In fair classification, we want to learn accurate classifiers that minimize disparities in predictions across groups. 

To quantify disparities across groups, we primarily focus on two common fairness metrics --- demographic (or statistical) parity (DP)~\cite{barocas-hardt-narayanan} and  equality of odds (EO)~\cite{hardt2016equality}. DP requires the classifier's predictions to be statistically independent of the sensitive attribute, $\modelrv \indep S$, where $X$ and $S$ are random variables representing the features and the sensitive attribute. For a binary sensitive attribute, DP implies, $P(\modelrv = 1 \mid S = 1) = P(\modelrv = 1 \mid S=0)$. EO, on the other hand, requires the classifier's predictions to be statistically independent of the sensitive attribute \emph{conditioned} on the true outcome, $\modelrv \indep S \mid Y$. For a binary sensitive attribute, EO implies $P(\modelrv = 1 \mid S = 1, Y=y) = P(\modelrv = 1 \mid S=0, Y=y)$ for both $y=0$ and $y=1$. Equality of opportunity (EQOPP) \cite{hardt2016equality} is a special case of equality of odds where the predictions are conditionally independent of the sensitive attribute given the true outcome is positive. A common strategy for learning fair classifiers is to then require the absolute difference in demographic parity (DP), 
$$\Delta\text{DP}(\bm\theta) = |P(\modelrv = 1 \mid S = 1) - P(\modelrv = 1 \mid S=0)|, $$
or the absolute difference in equality of odds, 
$$\Delta\text{EO}(\bm\theta) = \sum_{y=0}^1 |P(\modelrv = 1 \mid S = 1, Y=y) - P(\modelrv = 1 \mid S=0, Y=y)|,$$
to be close to zero while minimizing the empirical risk (\cref{eq:erm}). 
Smooth\footnote{Nearly smooth. The absolute value is not differentiable at zero, but this is not a concern since we rarely encounter exact zeros in practice.} surrogates to $\Delta$DP and $\Delta$EO that are estimated from an empirical distribution are commonly used in practice~\cite{zafar2017fairness},

\begin{equation}
\begin{split}
    \ddp{\train}{\bm\theta} &= |\mathbb{E}_{p_{\train}(X=\mathbf{x}\mid S=1)}[\model{}] - \mathbb{E}_{p_{\train}(X=\mathbf{x}\mid S=0)}[\model{}]| \\
    \deo{\train}{\bm\theta} &= \sum_{y=0}^1|\mathbb{E}_{p_{\train}(X=\mathbf{x} \mid S=0, Y=y)}[\model{}] - \mathbb{E}_{p_{\train}(X=\mathbf{x}|S=1, Y=y)}[\model{}]|, 
    \end{split}
\end{equation}
where $M_a^b(\bm\theta)$ denotes the surrogate for the fairness metric $b$ estimated from dataset $a$. When $\bm\theta$ is itself a function of $\w$, we will use the notation $M_a^b(\bm\theta(\w), \w)$.

\subsection{Other Related Work}

Many pre-processing based bias mitigation algorithms, learn low dimensional representations of the data that are independent of the sensitive attribute \cite{zemel2013learning, louizos2015variational}. Others aim to learn fairness promoting transformations in the ambient space of the data \cite{calmon2017optimized,sattigeri2018fairness,xu2018fairgan}.  Pre-processing methods that transform the data points can often run the risk of losing the semantics of the original data points. Often, they can be expensive, especially for high-dimensional data and large datasets. Furthermore, they must be performed before training any task-specific models and thus are not applicable when the goal is to improve a model already trained with an expensive procedure. In \cite{wang2019repairing}, the authors first obtain a counterfactual feature distribution by identifying the test instances, which when dropped the pre-trained model predictions are fair on the remaining test instances. They then learn a optimal transport based randomized pre-processor that maps the transforms the new test samples from the unprivileged group to fair counterfactual distribution. In contrast, our goal is to compute the influence scores for the training instances, which is more challenging. Additionally, \cite{wang2019repairing} requires the sensitive attributes be known at test time as the  pre-processor is specific to the unprivileged group. Instead, we aim to directly edit the trained model and eliminate the need of sensitive attribute labels at test time.

In-processing algorithmic fairness methodologies \cite{calmon2017optimized,kamishima2011fairness} are applicable when we can train models along with fairness constraints. Mary et al.\ \cite{mary2019fairness} enforce independence through a relaxation of the Hirschfeld-Gebelein-Rényi Maximum Correlation Coefficient (HGR) dependency measure. Similarly, \textit{Rebias} \cite{bahng2020learning,xu2021controlling} uses the Hilbert-Schmidt Independence Criterion (HSIC) to reduce the dependence of the representations on the sensitive attibutes. \textit{FairMixup} \cite{chuang2021fair} is a data augmentation strategy to improve the generalization properties of in-processing algorithms. These methods can be sensitive to the regularization strength and can sacrifice too much accuracy. In contrast, our approach is applicable when the base model to trained unconstrained on the main task, which can then be updated to remove the influence of the harmful instances and improve fairness. 

Existing post-processing methods \cite{kamiran2012decision,hardt2016equality,pleiss2017fairness,lohia2019bias,wei2020optimized,xu2021controlling} learn to transform the predictions of a trained model to satisfy a measure of fairness. These can often be limiting as they do not provide control over the fairness accuracy trade-off, may require that predicted scores to be well-calibrated, or may lead to excessive reduction in performance. In contrast, our method exploits the training data and model gradients efficiently to generate stronger, yet computationally inexpensive post-hoc interventions at minimal loss of predictive performance.


\section{Fair Classification through Post-Hoc Interventions}
\label{sec:method}
We now develop and analyze a post-processing fairness algorithm that given (i) a pre-trained model, (ii) access to the training data and optionally a validation set, (iii) a twice differentiable loss function and a once differentiable surrogate to the fairness metric, and (iv) an invertible Hessian at a local optimum of the loss, improves the fairness characteristics of a pre-trained model without requiring it to be refit. 

\subsection{Influence Functions for Group Fairness}
Assuming that we use a held-out validation set $\valid = \{\mathbf{x}_n, s_n, y_n\}_{n=1}^{N_\text{val}}$ to estimate $\ddp{\valid}{\bm\theta}$ and $\deo{\valid}{\bm\theta}$, we can leverage the result in \cref{eq:inf_generic} to compute the influence of the $n^{\text{th}}$ training instance on  $\Delta$DP,
\begin{equation}
    \begin{split}
        \influence_{\Delta\text{DP}, n} &= -\nabla_{\bm\theta} \ddp{\valid}{\hat{\bm{\theta}}}^T \mathbf{H}^{-1} \mathbf{g}_n, \\
        &= - \nabla_\theta |\mathbb{E}_{p_{\train}(X=\mathbf{x}\mid S=1)}[\model{}] - \mathbb{E}_{p_{\train}(X=\mathbf{x}\mid S=0)}[\model{}]| \big |_{\bm\theta = \hat{\bm\theta}}\mathbf{H}^{-1} \mathbf{g}_n,
    \end{split}
    \label{eq:IDDP}
\end{equation}
and on $\Delta$EO,
\begin{equation}
    \begin{split}
        \influence_{\Delta\text{EO}, n} &= -\nabla_{\bm\theta} \deo{\valid}{\hat{\bm{\theta}}}^T\mathbf{H}^{-1} \mathbf{g}_n, \\
        &= - \nabla_\theta \big|\sum_{y=0}^1\mathbb{E}_{p_{\train}(X=\mathbf{x} \mid S=0, Y=y)}[\model{}] - \mathbb{E}_{p_{\train}(X=\mathbf{x}|S=1, Y=y)}[\model{}]\big| \bigg|_{\bm\theta = \hat{\bm\theta}}\mathbf{H}^{-1} \mathbf{g}_n,
    \end{split}
    \label{eq:IDEO}
\end{equation}
where we have used $\ddp{\valid}{\hat{\bm{\theta}}}$ and $\deo{\valid}{\hat{\bm{\theta}}}$ to denote $M_{\valid}^{\Delta\text{DP}}(\bm\theta(\w), \w)\big|_{\w = \one, \bm\theta = \hat{\bm \theta}}$ and $M_{\valid}^{\Delta\text{EO}}(\bm\theta(\w), \w)\big|_{\w = \one, \bm\theta = \hat{\bm \theta}}$. We highlight that computing the influence of training instances on group fairness metrics requires solving a \emph{single} empirical risk minimization problem to recover $\hat{\bm\theta}$. The fairness metrics could also be estimated on the training data if no validation set is available. However, empirically we find that a validation set improves results.
\subsection{Post-Hoc Mitigation}
Revisiting \cref{eq:foij}, we note that the first order Taylor approximation about $\one$ is a function of $\w$. This opens up the possibility of post-hoc fairness improvement of a pre-trained $\hat{\bm\theta}$ by searching for a weight vector $\wfair$ such that $M_a^b(\thetafair) \approx 0$, where,
\begin{equation}
\begin{split}
    \thetafair \define \thetahat(\wfair) &= \hat{\bm\theta} + \sum_{n=1}^N \frac{d \bm\theta(\w)}{d w_n} \bigg|_{\w=\one} (w^{\text{fair}}_n - 1), \\
 &= \hat{\bm\theta} - \sum_{n=1}^N \mathbf{H}^{-1}\mathbf{g}_n (w^{\text{fair}}_n - 1),
\end{split}
\label{eq:ij}
\end{equation}
and $\wfair = [w^{\text{fair}}_1, w^{\text{fair}}_2, \ldots,  w^{\text{fair}}_N ]^T \in \real^N$. We could use gradient-based methods to learn $\wfair$ by optimizing a desired $M(\bm\theta(\w), \w)$ with respect to $\w$. However, computing and inverting the Hessian requires $\mathcal{O}(ND^2 + D^3)$ operations and is prohibitively expensive for large models. 
Instead, iterative procedures involving repeated Hessian-vector products are often used in practice~\cite{koh2017understanding}. A gradient-based procedure would need to either perform this iterative procedure after \emph{every} gradient step or pre-compute $\sum_n \mathbf{H}^{-1}\mathbf{g}_n$, rendering the procedure computationally intractable for most cases of interest. Moreover, solely optimizing $M(\bm\theta(\w), \w)$ will likely result in fair but inaccurate classifiers, and the optimized weights will typically not be interpretable.

We circumvent these issues by constraining the elements of $\w$ to be binary. In \cref{prop1}, we show that we can construct $\wfair$ by simply zeroing out coordinates of $\wfair$ that correspond to training instances with a positive influence on the fairness metric of interest. This construction is inherently interpretable. Setting an element to zero implies training without the corresponding training instance. Zeroing out instances with positive influence equates to  refitting the model after dropping training instances that increase disparity across groups. 

We now establish conditions under which $\wfair$ as constructed above leads to classifiers with lower group disparities. Let $\one \in \real^N$ denote an $N$-dimensional vector of all ones, $b$ denote a fairness metric, $\Mlin{\valid}^b(\bm\theta(\w), \w)$ denote a linearized approximation to $\M{\valid}^b(\bm\theta(\w)), \w)$ about $\one$, and $\ind{\alpha > \beta}$ denote an indicator function that takes the value one if $\alpha > \beta$ is true and zero otherwise.
\begin{prop}
Let $\w_{\mathrm{fair}} \in \{0, 1\}^N$ be a N dimensional binary vector such that its $n^{\mathrm{th}}$ coordinate is $w_n^{\mathrm{fair}} = 1 - \ind{\I > 0}$, then,
$$
\w_{\mathrm{fair}} = \underset{\w \in \{0, 1\}^N}{\mathrm{argmin }}\Mlin{\valid}^b(\bm\theta(\w), \w) - \M{\valid}^b(\bm\theta(\one), \one),
$$
and $\Mlin{\valid}^b(\bm\theta(\w_{\mathrm{fair}}), \w_{\mathrm{fair}}) - \M{\valid}^b(\bm\theta(\one), \one) \leq 0$.
\label{prop1}
\end{prop}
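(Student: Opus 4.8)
The plan is to make the linearization explicit and observe that it turns the objective into a function that is separable and linear in each binary coordinate, after which the minimization decouples into $N$ independent scalar problems. Concretely, applying the chain-rule definition of the influence in \cref{eq:inf_generic} to the first-order expansion in \cref{eq:foij}, I would write the linearized metric as
$$
\Mlin{\valid}^b(\bm\theta(\w), \w) = \M{\valid}^b(\bm\theta(\one), \one) + \sum_{n=1}^N \I\,(w_n - 1),
$$
so that the centered objective to be minimized is simply $\Mlin{\valid}^b(\bm\theta(\w), \w) - \M{\valid}^b(\bm\theta(\one), \one) = \sum_{n=1}^N \I\,(w_n - 1)$. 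The entire argument then reduces to minimizing this sum over $\w \in \{0, 1\}^N$.

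Because the sum is separable across coordinates and each $w_n$ ranges independently over $\{0,1\}$, I would minimize term by term. For a fixed $n$, the factor $(w_n - 1)$ equals $-1$ when $w_n = 0$ and $0$ when $w_n = 1$, so $\I\,(w_n - 1)$ equals $-\I$ or $0$ respectively. Comparing these two values shows the term is minimized by taking $w_n = 0$ whenever $\I > 0$ (realizing the strictly negative contribution $-\I$) and $w_n = 1$ whenever $\I \le 0$ (multiplying the nonpositive factor by zero). This is exactly the rule $w_n^{\mathrm{fair}} = 1 - \ind{\I > 0}$, with the $\I = 0$ case — where both choices yield the same value $0$ — resolved in favor of retaining the point, consistent with the stated convention. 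Assembling the coordinates establishes that $\w_{\mathrm{fair}}$ attains the coordinate-wise minimum and hence minimizes the sum.

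For the inequality, the cleanest route is to observe that $\one \in \{0,1\}^N$ is itself feasible and attains objective value $\sum_{n=1}^N \I\,(1 - 1) = 0$; since $\w_{\mathrm{fair}}$ is a minimizer, the optimal value is necessarily $\le 0$. Alternatively, evaluating the sum directly at $\w_{\mathrm{fair}}$ contributes $-\I < 0$ on the coordinates with $\I > 0$ and $0$ on the rest, so every term is nonpositive and the bound follows. I do not anticipate a genuine obstacle here: the content lies entirely in recognizing that the linearization converts the problem into a separable integer program whose solution is read off coordinate-wise. The only points requiring care are the tie-breaking at $\I = 0$ and keeping the \emph{linearized} $\Mlin{\valid}^b$ (rather than the true $\M{\valid}^b$) throughout, since the conclusion is exact for the linearization but only approximate for the original fairness metric.
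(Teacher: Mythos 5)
Your proof is correct and follows essentially the same route as the paper: linearize $\M{\valid}^b$ about $\one$ to get the separable objective $\sum_n \I(w_n-1)$, then minimize coordinate-wise over $w_n \in \{0,1\}$, which yields exactly $w_n^{\mathrm{fair}} = 1-\ind{\I>0}$, with the nonpositivity following because $\one$ is feasible with value zero (the paper phrases the same coordinate-wise argument by splitting the sum into the $\ind{\I>0}$ and $\ind{\I\le 0}$ parts). Your explicit handling of the $\I = 0$ tie and the remark that the conclusion is exact only for the linearized metric are both consistent with the paper's treatment.
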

\begin{proof}
Denote $\M{\valid}^b(\thetahat):= \M{\valid}(\bm\theta(\one), \one)$. From a first order Taylor approximation about $\one$, we have,
\begin{equation}
\begin{split}
\Mlin{\valid}^b(\bm\theta(\w), \w) &= \M{\valid}^b(\thetahat) + \sum_{n=1}^{N}\frac{d\M{\valid}^b(\thetahat(\w), \w)}{d w_n}\Bigr|_{\w=\one, \bm\theta=\hat{\bm\theta}} (w_n - 1), \\
&= \M{\valid}^b(\thetahat) + \sum_{n=1}^{N}\I (w_n - 1).
\end{split}
\end{equation}
Rearranging terms,
\begin{equation}
\begin{split}
\Mlin{\valid}^b(\bm\theta(\w), \w) -  &\M{\valid}^b(\thetahat) = \sum_{n=1}^N \I (w_n - 1) \\
&= \sum_{n=1}^N \ind{\I > 0} \I (w_n - 1) + \sum_{n=1}^N \ind{\I \leq 0} \I (w_n - 1).
\end{split}
\label{eq:prop1}
\end{equation}
Finally, the result follows from observing that $w_n \in \{0, 1\}$ and noting that the first term can be either zero (when $w_n=1$)  or negative (when $w_n=0$ and $\ind{\I > 0}$) and the second term can be either zero (when $w_n=1$) or positive (when $w_n=0$ and $\ind{\I \leq 0}$). $\wfair$ drives the second term to zero and sets the first term to the smallest value attainable by a binary $\w$.
\end{proof}
It follows that $\M{\valid}^b(\bm\theta(\wfair), \wfair) \approx\leq \M{\valid}^b(\thetahat(\one), \one)$, with the inequality holding when the linearization is accurate. Finally, defining $\train_{-} = \{\mathbf{z}_n \mid \mathbf{z}_n \in \train \text{ and }\mathcal{I}_{M, n} > 0 \}$, we arrive at the post-hoc mitigated classifier by plugging in $\wfair$ from \cref{prop1} in \cref{eq:ij},
\begin{equation}
    \thetafair = \hat{\bm\theta} + \sum_{m \in \train_{-}} \mathbf{H}^{-1}\mathbf{g}_m.
    \label{eq:thetafair}
\end{equation}
In \cref{sec:proofs} we consider an alternate $\wfair$ that is guaranteed to decrease \emph{both} the loss $\ell$ and the fairness metric on the validation set $\valid$. In early experiments, we did not see consistent benefits from using this alternate version and do not consider it further in this paper.
\subsection{Practical Considerations}
\paragraph{Hessian computation and inversion.}
The influence function computation involves computing and inverting the Hessian of the loss function on the training data. This requires $\mathcal{O}(ND^2 + D^3)$ operations. Both computing and storing the Hessian becomes prohibitively expensive for large models. While diagonal approximations to the Hessian are possible, they tend to be inaccurate. Instead, iterative methods based on the (truncated) Neumann expansion have been proposed in the past~\cite{koh2017understanding}. However, more recent work has found the Neumann approximation to be inaccurate, cf.~\cite[Appendix C]{stephenson2019sparse} and prone to numerical issues when the eigenvalues of the Hessian fall outside the $[0, 1]$ interval. Motivated by these shortcomings, here we develop an alternative iterative procedure based on the recently proposed WoodFisher approximation~\cite{singh2020woodfisher}.

The WoodFisher approximation provides us with the following recurrence relation for estimating the inverse of the Hessian:
\begin{equation}
\label{eq:inverse_fisher}
\begin{split}
     \mathbf{\hat{H}}_{n+1}^{-1} & = \mathbf{\hat{H}}_{n}^{-1} - \frac{\mathbf{\hat{H}}_{ n}^{-1}\nabla_{\theta} \ell(y_{n+1}, \model{n+1})\nabla_\theta \ell(y_{n+1}, \model{n+1}^\top \mathbf{\hat{H}}_{n}^{-1}}{N + \nabla_\theta \ell(y_{n+1}, \model{n+1})^\top \mathbf{\hat{H}}_{n}^{-1}\nabla_{\theta}\ell(y_{n+1}, \model{n+1})},
\end{split}
\end{equation}
with $\mathbf{\hat{H}}_{0}^{-1} = \lambda^{-1}I_{D}$, and $\lambda$ a small positive scalar.

For computing influence functions we only need to store the product of the inverse Hessian with a vector $\mathbf{v}$, i.e., $\mathbf{H}^{-1}\mathbf{v}$, which should only require $\mcO(D)$ storage. However, if we first compute the inverse Hessian and then compute the Hessian-vector product (HVP), we would need $\mcO(D^2)$ storage. To sidestep this issue, we develop the following coupled recurrences that only use $\mcO(D)$ storage. We call these coupled recurrences \emph{IHVP-WoodFisher},
\begin{equation}
\label{eq:IHVP-WoodFisher}
\begin{split}
     \vo_{n+1}  = \vo_{n} - \frac{\vo_{n}\nabla_\theta \ell(\mathbf{z}_{n+1})^\top\vo_{n}}{N + \nabla_\theta \ell(\mathbf{z}_{n+1})^\top\vo_{n}}, \quad
     \vk_{n+1}  = \vk_{n} - \frac{\vo_{n}\nabla_\theta \ell(\mathbf{z}_{n+1})^\top\vk_{n}}{N + \nabla_\theta \ell(\mathbf{z}_{n+1})^\top\vo_{n}},
\end{split}
\end{equation}
where, we use $\ell(\mathbf{z}_{n+1})$ as shorthand for $\ell(y_{n+1}, \model{n+1})$, $\vo_{1} = \nabla_\theta \ell(y_1, \model{1})$, and $\vk_{1} = \mathbf{v}$. 
\begin{prop}
Let $\vo_{1} = \nabla_\theta \ell(\mathbf{z}_{1})$, $\vk_{1} = \mathbf{v}$, and $N$ denote the number of training instances. The Hessian-vector product $\mathbf{H}^{-1}\mathbf{v}$ is approximated by iterating through the IHVP-WoodFisher recurrence in \cref{eq:IHVP-WoodFisher} and computing $\vk_{N}$.
\label{prop:IHVP}
\end{prop}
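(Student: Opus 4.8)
The plan is to prove \cref{prop:IHVP} by induction on the iteration index $n$, showing that the two coupled vector sequences in \cref{eq:IHVP-WoodFisher} reproduce exactly the action of the matrix recurrence \cref{eq:inverse_fisher} on the fixed vector $\mathbf{v}$, so that no $D \times D$ matrix is ever materialized. The central object is a loop invariant asserting that, at each step, $\vk_n$ equals the current inverse-Hessian estimate applied to $\mathbf{v}$, i.e. $\vk_n = \hat{\mathbf{H}}_n^{-1}\mathbf{v}$, while the auxiliary vector $\vo_n$ equals that same estimate applied to the gradient driving the next rank-one update, i.e. $\vo_n = \hat{\mathbf{H}}_n^{-1}\nabla_\theta\ell(\mathbf{z}_{n+1})$. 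If this invariant holds at every step, then reading off $\vk_N$ after the final iteration yields $\hat{\mathbf{H}}_N^{-1}\mathbf{v}$, which is precisely the WoodFisher approximation to $\mathbf{H}^{-1}\mathbf{v}$, establishing the claim.

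First I would dispatch the base case by checking that the stated initialization is consistent with the invariant, using $\hat{\mathbf{H}}_0^{-1} = \lambda^{-1}I_D$: with $\vk_1 = \mathbf{v}$ and $\vo_1 = \nabla_\theta\ell(\mathbf{z}_1)$ one verifies the invariant at the first step, up to the scaling induced by $\lambda$, which I would either track explicitly or absorb by normalizing the initial conditions. For the inductive step on the $\vk$ sequence, I would substitute the Woodbury/Sherman--Morrison update \cref{eq:inverse_fisher} into $\hat{\mathbf{H}}_{n+1}^{-1}\mathbf{v}$ and regroup: the rank-one correction factorizes as $\hat{\mathbf{H}}_n^{-1}\nabla_\theta\ell(\mathbf{z}_{n+1})\big(\nabla_\theta\ell(\mathbf{z}_{n+1})^\top\hat{\mathbf{H}}_n^{-1}\mathbf{v}\big)$ divided by the scalar $N + \nabla_\theta\ell(\mathbf{z}_{n+1})^\top\hat{\mathbf{H}}_n^{-1}\nabla_\theta\ell(\mathbf{z}_{n+1})$, and under the invariant the two inverse-Hessian products collapse to $\vo_n$ and $\vk_n$ respectively. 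This reproduces the $\vk$ recurrence verbatim, so the only quantities that must be formed are the scalar inner products $\nabla_\theta\ell(\mathbf{z}_{n+1})^\top\vk_n$ and $\nabla_\theta\ell(\mathbf{z}_{n+1})^\top\vo_n$, all at $\mcO(D)$ cost and storage.

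The main obstacle is the auxiliary $\vo$ recurrence: for the $\vk$ step above to remain legitimate at the next iteration, $\vo_{n+1}$ must itself equal $\hat{\mathbf{H}}_{n+1}^{-1}$ applied to the correct gradient, and verifying this is where the coupling is delicate. The subtlety is that advancing the inverse-Hessian estimate from index $n$ to $n+1$ perturbs the matrix by a rank-one term built from $\nabla_\theta\ell(\mathbf{z}_{n+1})$, so I would pin down precisely which gradient $\vo_{n+1}$ is meant to be paired with and confirm that the stated update transports $\vo_n$ to that target through the same Woodbury correction, without ever recomputing an inverse-Hessian--vector product from scratch. I expect the bulk of the effort, and the point most in need of a carefully fixed indexing convention, to be showing that this single forward pass maintains the gradient-side product consistently; once the invariant is established for both sequences, the inductive closure and the final read-off of $\vk_N$ are immediate. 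Throughout, I would treat the statement as an exact identity between \cref{eq:IHVP-WoodFisher} and \cref{eq:inverse_fisher} applied to $\mathbf{v}$, so that the only approximation present is the one already inherent in the WoodFisher estimate $\hat{\mathbf{H}}_N^{-1} \approx \mathbf{H}^{-1}$.
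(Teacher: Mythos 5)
Your overall route---deriving the coupled $\vo,\vk$ recurrences by applying the Sherman--Morrison/WoodFisher rank-one update of \cref{eq:inverse_fisher} to the two vectors and tracking the invariants $\vk_n = \hat{\mathbf{H}}_n^{-1}\mathbf{v}$ and $\vo_n = \hat{\mathbf{H}}_n^{-1}\nabla_\theta\ell(\mathbf{z}_{n+1})$---is the same manipulation the paper performs (the paper multiplies \cref{eq:inverse_fisher} on the right by $\nabla_\theta\ell(\mathbf{z}_{n+1})$ and by $\mathbf{v}$ and substitutes). However, your final resolution is wrong: you announce that you will "treat the statement as an exact identity between \cref{eq:IHVP-WoodFisher} and \cref{eq:inverse_fisher} applied to $\mathbf{v}$, so that the only approximation present is the one already inherent in the WoodFisher estimate." That exact identity does not hold, and it fails at precisely the point you yourself flagged as delicate. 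The $\vo$ update, applied to $\vo_n = \hat{\mathbf{H}}_n^{-1}\nabla_\theta\ell(\mathbf{z}_{n+1})$, produces $\vo_{n+1} = \hat{\mathbf{H}}_{n+1}^{-1}\nabla_\theta\ell(\mathbf{z}_{n+1})$ --- the \emph{old} gradient hit by the \emph{new} inverse estimate --- whereas closing the induction for the next $\vk$ step requires $\vo_{n+1} = \hat{\mathbf{H}}_{n+1}^{-1}\nabla_\theta\ell(\mathbf{z}_{n+2})$. No amount of re-indexing fixes this: the recurrence never touches $\nabla_\theta\ell(\mathbf{z}_{n+2})$ when forming $\vo_{n+1}$ (doing so would require a fresh inverse-Hessian--vector product, which is exactly what the scheme avoids). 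The paper bridges this gap with an explicit extra assumption, namely that $\nabla_\theta\ell(\mathbf{z}_{n+1})$ and $\nabla_\theta\ell(\mathbf{z}_{n+2})$ are close, so $\vk_N$ only \emph{approximates} $\hat{\mathbf{H}}_N^{-1}\mathbf{v}$ even before the WoodFisher-versus-true-Hessian error enters; this is why the proposition is phrased in terms of approximation rather than equality.

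A secondary, more minor mismatch is the initialization: the invariant with $\hat{\mathbf{H}}_0^{-1} = \lambda^{-1}I_D$ would require $\vk_1 = \lambda^{-1}\mathbf{v}$ and $\vo_1 = \lambda^{-1}\nabla_\theta\ell(\mathbf{z}_1)$, whereas the recurrence seeds $\vk_1 = \mathbf{v}$, $\vo_1 = \nabla_\theta\ell(\mathbf{z}_1)$; you correctly note this can be absorbed into an overall scaling (and the paper indeed tunes a scaling factor on the validation set). So your proposal identifies the right mechanism and the right danger spot, but the promised exact loop invariant cannot be established; to make the argument go through you must either weaken the conclusion to the paper's form (convergence of $\vk_n$ to the IHVP under the assumption that consecutive per-example gradients are close, once $\hat{\mathbf{H}}_n^{-1}$ has converged) or quantify the error incurred by the gradient mismatch at each step, neither of which your current plan does.
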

We prove \cref{prop:IHVP} in \cref{sec:proofHVP}. In practice, we observe that even using $B \ll N$ iterations produces useful approximations. In \cref{sec:app_woodfisher_acc}, we compare the approximation accuracy of the the \textit{IHVP-WoodFisher} and the iterative \textit{Neumann} approach on cases where it is tractable to exactly compute the IHVP. \cref{alg:fairij} (see Appendix) summarizes our vanilla approach.

\noindent \textbf{Computational speedups:} Although Algorithm \ref{alg:fairij} suggests running the \textit{IHVP-WoodFisher} iterations for each training instance for clarity of exposition, in practice, we use the following trick to run the \textit{IHVP-WoodFisher} iterations only once for the entire training dataset. First, for any  $p*p$  symmetric matrix $\mathbf{A}$ and $p$-dimensional vectors $\mathbf{x}$ and $\mathbf{y}$,  $\mathbf{x}^T \mathbf{A}$, $\mathbf{y} = \mathbf{y}^T \mathbf{A} \mathbf{x}$. From Equation \ref{eq:inf_generic}, the influence calculation involves computing $\nabla_{\bm\theta} M(\bm\hat{\bm\theta}, \one)^T \mathbf{H}^{-1} \mathbf{g}_n$ for all $n$ in the training dataset. Since  is symmetric, we can equivalently compute $ \mathbf{g}_n^{T} \mathbf{H}^{-1} \nabla_{\bm\theta}  M(\bm\hat{\bm\theta}, \one)$. We can then run the \textit{IHVP-WoodFisher} iterations to approximate $\mathbf{H}^{-1} \nabla_{\bm\theta}  M(\bm\hat{\bm\theta}, \one)$. Crucially, we need to do this only once. With the approximation in hand, computing the per data influence requires a single dot product per data instance between $g_n$ and the \textit{IHVP-WoodFisher} approximated $\mathbf{H}^{-1} \nabla_{\bm\theta}  M(\bm\hat{\bm\theta}, \one)$. In contrast to other approaches to scaling up influence functions~\cite{schioppa2022scaling}  our approach only requires  the storage of a single $p$-dimensional vector. We call this more efficient version \texttt{Fair-IJ} and is summarized in Algorithm \ref{alg:fairij_faster}. 

\paragraph{Most influential instances.} Our development and analysis depends on first order linear approximations of non-linear functions about $\one$. We expect the quality of these approximations to deteriorate further away from $\one$, i.e., with increasing number of instances dropped. See Theorem 1 in \cite{broderick2020automatic} for additional discussion on the quality of approximation.  We find that instead of dropping all instances with positive influence, dropping the $k$ most influential instances yields better bias mitigation. We select the hyperparameter $k$ that results in the lowest (best) fairness score on the validation set. Additionally, \cite{singh2020woodfisher} observed that the \textit{WoodFisher} Hessian estimate $\mathbf{\hat{H}}$ differs from the true Hessian by a scaling factor, i.e $\mathbf{\hat{H}} \propto \mathbf{H}$. We select, from a pre-specified set, the scaling factor that minimizes the fairness score on the validation set. We then scale the \textit{IHVP-WoodFisher} estimates using the selected scaling factor. See~\cref{sec:app_details}. 
\begin{algorithm}[t]
   \caption{\texttt{Fair-IJ} }
   \label{alg:fairij_faster}
\begin{algorithmic}[1]
\State {\bfseries Input:} Pre-trained model parameters $\hat{\bm\theta}$, training set $\train$, loss function $\ell$, a validation set $\valid$ and a smooth surrogate to the fairness metric $b \in \{\Delta\text{DP}, \Delta\text{EO} \}$, $M_{\valid}^b$.
    \State {\bfseries Calculate:} $\nabla_{\bm\theta}  M(\bm\hat{\bm\theta}, \one)$ using \cref{eq:IDDP} or \cref{eq:IDEO}.
   \State {\bfseries Calculate:} $\vr = \mathbf{H}^{-1} \nabla_{\bm\theta}  M(\bm\hat{\bm\theta}, \one)$ by setting $\vk_1 = \nabla_{\bm\theta}  M(\bm\hat{\bm\theta}, \one)$ and iterating through \cref{eq:IHVP-WoodFisher} for B iterations.
   \State {\bfseries Calculate:} the fairness influence $\influence_{b,n}$ of each training instance $\mathbf{z}_{n}$ on $\valid$ by computing dot product between $g_n$ and $\vr$.
   \State{\bfseries Construct:} the set $\train_{-}$ and denote its cardinality, $|\train_{-}| = K$.
   \State {\bfseries Initialize:} $\thetafair^0 := \hat{\bm\theta}$
   \For{$k \in [1, \ldots, K]$}
   
\State {\bfseries Construct:} $\train_{-}^k = \{\mathbf{z}_{n}\in \train_{-} \mid \influence_{b, n} >  \influence_{b, (K-k)}\}$, where $\influence_{b, (K-k)}$ denotes the $(K-k)^\text{th}$ order statistic of the influence scores $[\influence_{b,1}, \ldots, \influence_{b, K}]$.
   
   \State {\bfseries Calculate:} $\thetafair^{k}$ by replacing $\train_{-}$ with using $\train^{k}_{-}$ in \cref{eq:thetafair}.
 \State {\bfseries If} $b_{\valid}(\thetafair^k) <  b_{\valid}(\thetafair^{k-1})$ {\bfseries set} $\thetafair := \thetafair^{k}$  {\bfseries else set} $\thetafair := \thetafair^{k-1}$ and break out of the for loop.
   \EndFor
   \State {\bfseries Return:} fair model parameters $\thetafair$.
   \end{algorithmic}
\end{algorithm}

\section{Experiments}
\label{experiments}

\begin{figure}
\begin{subfigure}[b]{.33\linewidth}
\includegraphics[width=\linewidth]{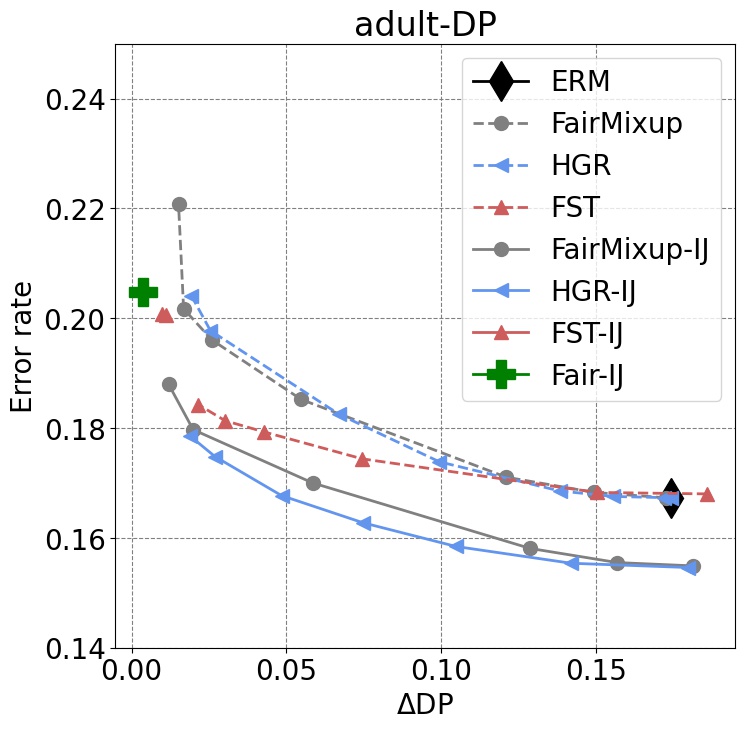}
\end{subfigure}
\begin{subfigure}[b]{.33\linewidth}
\includegraphics[width=\linewidth]{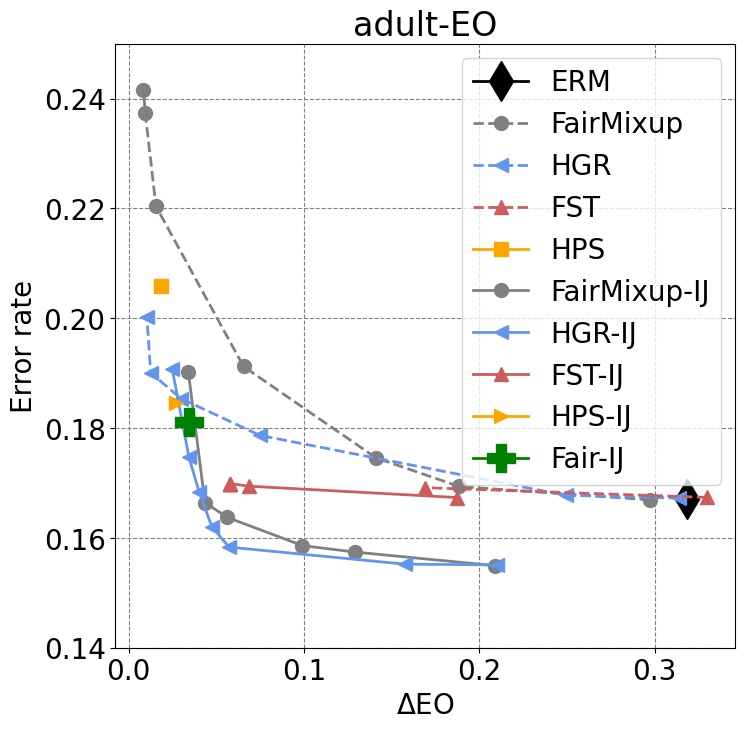}
\end{subfigure}
\begin{subfigure}[b]{.33\linewidth}
\includegraphics[width=\linewidth]{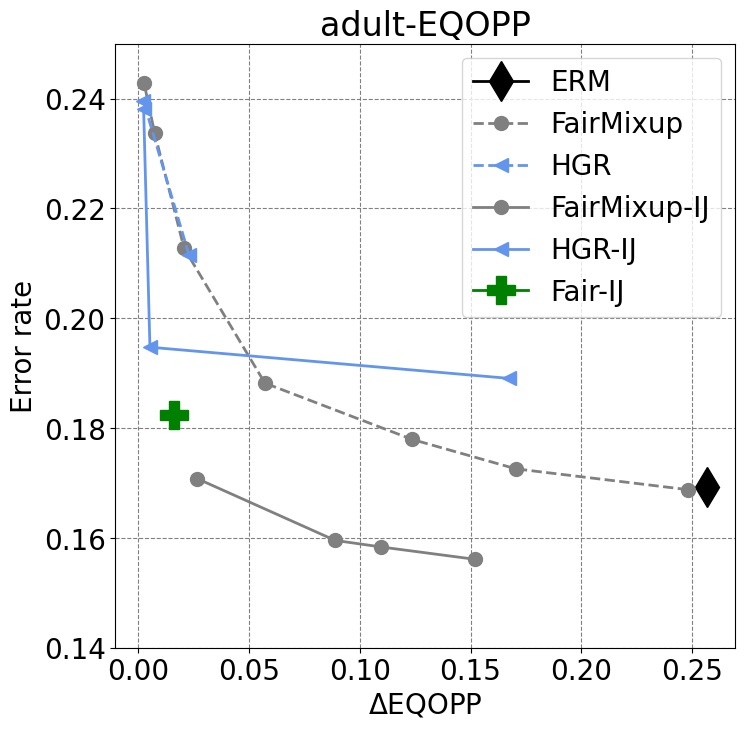}
\end{subfigure}

\begin{subfigure}[b]{.33\linewidth}
\includegraphics[width=\linewidth]{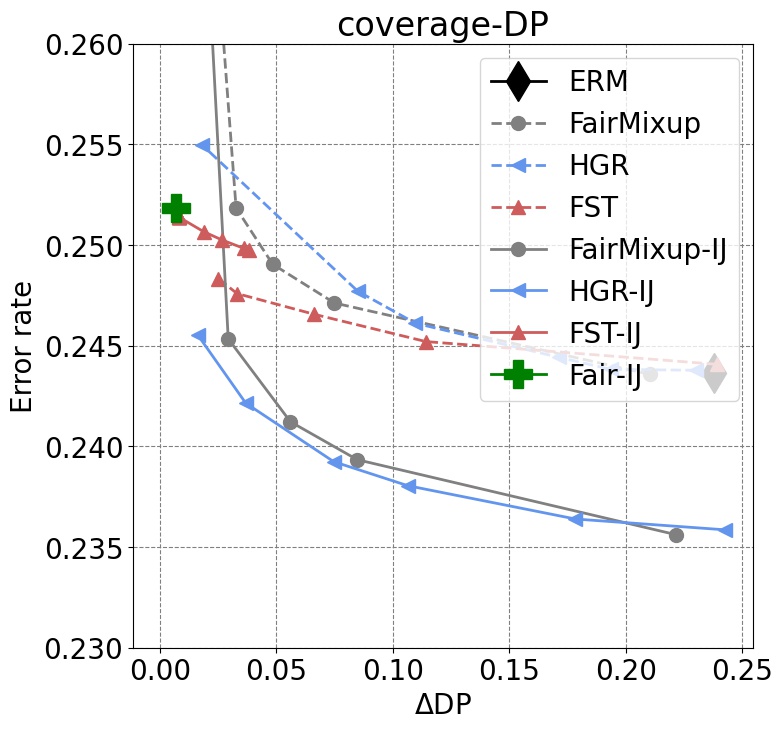}
\end{subfigure}
\begin{subfigure}[b]{.33\linewidth}
\includegraphics[width=\linewidth]{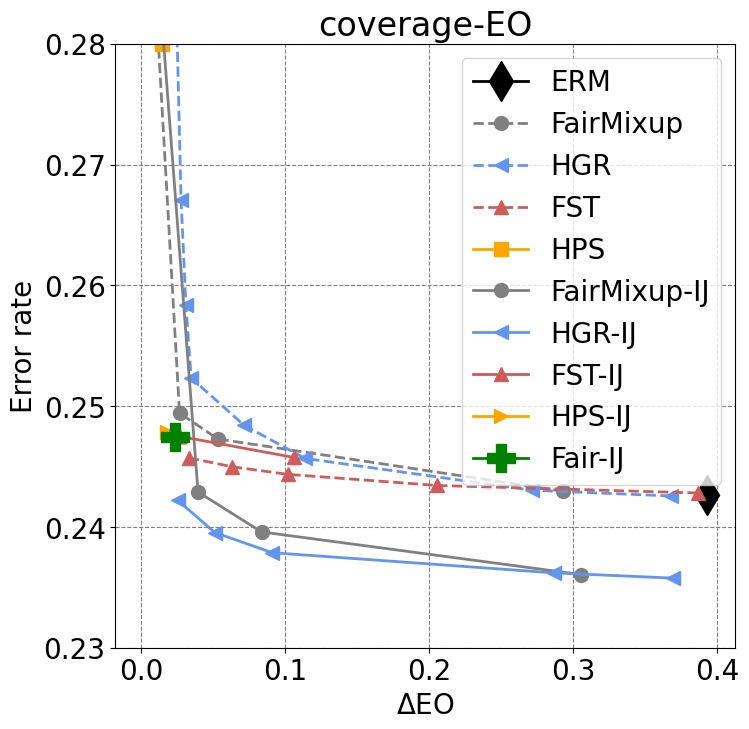}
\end{subfigure}
\begin{subfigure}[b]{.33\linewidth}
\includegraphics[width=\linewidth]{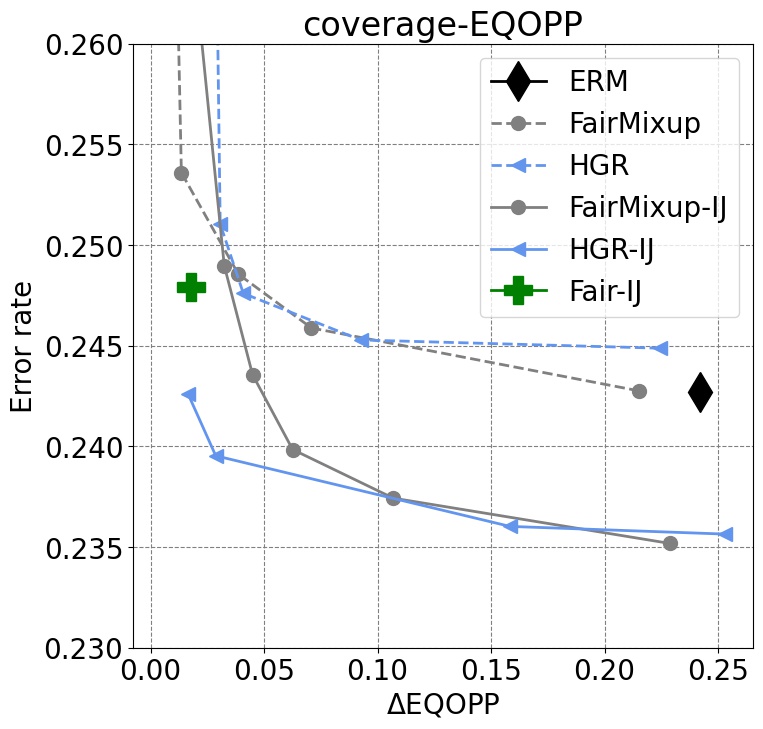}
\end{subfigure}
\caption{{Accuracy and fairness (DP, EO, and EQOPP) Pareto frontier for the Adult and the Coverage datasets averaged over 10 runs. Points closer to the bottom-left achieve the best fairness/accuracy trade-off.}} 
\label{fig:pareto}
\end{figure}

We first study our method on tabular datasets including the well-known Adult dataset \cite{Dua:2019} and the recently released ACSPublicCoverage \cite{ding2021retiring} dataset.  ACSPublicCoverage is one among a suite of datasets aimed to be larger alternatives to previously available fairness datasets. We then investigate our method on the text modality and larger pre-trained models using the CivilComments dataset \cite{borkan2019nuanced}. 

\subsection{Tabular Datasets}
\noindent \textbf{Setup.} The task in the Adult dataset is to predict if a person has an income above a threshold. We use gender as the sensitive attribute. This dataset comes with a fixed test set. A random $33\%$ of the training data is used as the validation set for each trial of the experiments. We follow the pre-processing steps from \cite{mary2019fairness}. The task in the ACSPublicCoverage dataset is to predict if a person has public health insurance coverage. For our experiments,  we only consider instances from the year $2014$, from the state of California, and belonging to the white or black race. We consider race as the sensitive attribute. In the rest of the paper, we refer to this subset simply as \textit{Coverage} dataset. We randomly split the dataset into train/validation/test with partition $50\%$ / $20\%$ / $30\%$, respectively, for each trial. Additionally, for both datasets we standardize the features before training our methods and the baselines. 

We first train a $1$-hidden layer fully connected neural network with \textit{SeLu} activation function and $100$ hidden units. This initial model is trained using standard ERM loss with batch size set to $256$. We use the Adam optimizer with learning rate set to $10^{-4}$. We train the ERM and baselines for $100$ epochs and pick the checkpoint with best accuracy on the validation set. We then employ \cref{alg:fairij_faster} to arrive at the Fair-IJ solution. We select $k$ and the IHVP scaling term based on the same validation set used to compute the influence scores. 

\noindent \textbf{Compared algorithms.} We compare our method to several in-processing and post-processing bias mitigation algorithms that are applicable to a wide range of model classes including deep neural networks. We omit the comparisons with pre-processing methods as our goal is to improve a given pre-trained model. Among in-processing algorithms, we compare with: \textit{FairMixup} \cite{chuang2021fair} and  \textit{HGR} \cite{mary2019fairness}. \textit{FairMixup} achieves fairness through a \textit{mixup} based regularization employed during training. The \textit{HGR} approach proposes a surrogate to HGR dependence measure and promotes fairness during training by enforcing conditional independence implied by the fairness metrics. On both of these methods, the fairness accuracy trade-off is achieved through the strength of the regularizer. 

Among post-processing methods, we compare with \textit{FST} \cite{wei2020optimized} and \textit{HPS} \cite{hardt2016equality}. \textit{FST} optimally transforms the pre-trained model's prediction scores to satisfy a specified fairness constraint and supports DP and EO metrics. To improve the performance of \textit{FST} we re-calibrate the prediction scores using isotonic Regression. \textit{HPS} is designed to enforce EO and requires knowledge of the sensitive attribute at test time. We use a fixed pre-trained model, architecture and training procedure for all the baselines. We also run the baselines on the edited model obtained from the output of our \texttt{Fair-IJ} algorithm, $\thetafair$. Specifically, we fine-tune the \texttt{Fair-IJ} solution using the in-processing algorithms. In the case of post-processing algorithms, we directly apply these to the \texttt{Fair-IJ} edited model.

\noindent \textbf{Results.} Figure \ref{fig:pareto} shows the accuracy and fairness Pareto frontier for the Adult and the Coverage datasets averaged over 10 runs. It can be seen that \texttt{Fair-IJ} consistently produces lower disparities across datasets and metrics. 
Moreover, we observe that baselines operating on $\thetafair$, FairMixup-IJ, HGR-IJ, FST-IJ, and HPS-IJ often achieve substantially better accuracy/fairness trade-off over their counterparts. In Figure \ref{fig:inf_scores}, we plot the sorted influence scores of training instances on the average demographic parity of a held-out validation set for different datasets considered in this paper. 


\subsection{CivilComments Dataset}
\label{sec:civilcomments}
\noindent \textbf{Setup.} The \textit{CivilComments} dataset \cite{borkan2019nuanced} consists of human-annotated attributes on hate comments posted on the internet. The task here is to predict whether a particular comment is toxic. Prior work has shown that automatic toxicity classifiers can achieve sub-optimal performance on certain subpopulations \cite{park2018reducing,dixon2018measuring,sagawa2019distributionally}. The goal is to apply our approach to  mitigate bias in pre-trained toxicity classifiers. In our experiments we consider \texttt{Muslim} as the sensitive attribute. Similar to \cite{wilds2021}, we assign an instance to the unprivileged group whenever it is annotated with that attribute and assign the rest to the privileged group.

To show the adaptability of our method on large neural networks, we consider three different variants of the pre-trained frozen BERT \cite{devlin2018bert}, where features are augmented with: a) \fairlc : a linear classifier head, b) \fairnc: a non-linear classifier head and c) \fairtt{n}: with $n$ trigger-tokens in the embedding layer. The last variant is an extension of \text{prompt-tuning} \cite{lester2021power} or \text{prefix-tuning} \cite{li2021prefix} methods, which are more powerful ways of fine-tuning large-language models than only updating classifier heads. It is worth noting that the adaptation of trigger-tokens scale fittingly in optimizing weights in Equation \ref{eq:ij}.
We compare our results to the simple yet effective method of Gap Regularization (GapReg) from \cite{chuang2021fair} where a model optimization is regularized by a fairness measure added to the training loss while scaled by $\lambda$ factor to control the regularizer magnitude, as defined in Equation 1 of \cite{chuang2021fair}.


\begin{table}[t]
\small
\begin{tabular}{lcccc}
\toprule
Model & BA & $\Delta\text{EO}$ & BA & $\Delta\text{DP}$ \\
\midrule
\fairlc-ERM & 57.3 & 0.314 & 58.4 & 0.246 \\
\fairlc-GapReg & 57.9 & 0.133 & 57.5 & 0.185\\
\fairlc-\texttt{Fair-IJ} & 58.6 & 0.125 & 57.1 & 0.011 \\

\fairnc-ERM & 59.3 & 0.326 & 59.3 & 0.261 \\
\fairnc-GapReg & 59.1 & 0.144 & 58.2 & 0.054 \\
\fairnc-\texttt{Fair-IJ} & 59.8 & 0.126 & 58.6 & 0.008 \\
\midrule
\end{tabular}
\quad
\begin{tabular}{lcccc}
\toprule
Model & BA & $\Delta\text{EO}$ & BA & $\Delta\text{DP}$ \\
\midrule
\fairtt{4}-ERM & 57.5 & 0.360 & 57.5 & 0.268  \\
\fairtt{4}-\texttt{Fair-IJ} & 56.0  &  0.102 & 55.2 & 0.042  \\
\fairtt{8}-ERM & 58.2 & 0.317 & 58.2 & 0.254 \\
\fairtt{8}-\texttt{Fair-IJ} & 56.5  &  0.113 & 56.4 & 0.071 \\
\fairtt{10}-ERM & 58.3 & 0.348 & 58.3 & 0.234 \\
\fairtt{10}-\texttt{Fair-IJ} & 57.2 & 0.110  & 56.6 & 0.089 \\

\midrule
\end{tabular}
\caption{Comparison between ERM, Gap Regularization (for $\lambda=1$), and \texttt{Fair-IJ} for CivilComments on the sensitive attribute \texttt{MUSLIM} when we use pre-trained BERT model. We report the difference in equality of odds ($\Delta{\text{EO}}$), difference in demographic parity ($\Delta{\text{DP}}$), along with the task balanced accuracy (BA).}
\label{tab:civil_muslim}
\end{table}

\noindent \textbf{Results.}
We present our results in Table \ref{tab:civil_muslim}. In comparison to the baseline methods ERM and GapReg, \texttt{Fair-IJ} consistently performs better in mitigating the group disparities. Additionally, \texttt{Fair-IJ} manages to have a better task performance (balanced accuracy for toxicity classification) trade-off while attempting to achieve a lower disparity. In Table \ref{tab:civil_muslim}, we also present the results on virtual trigger-tokens, which we notice to be performing equally well in lowering the disparity. This is a significant observation as it shows how \texttt{Fair-IJ} can be efficiently integrated with large neural network through the scalable influence calculations of relatively few trigger parameters. Further training details, observations and baselines are presented in the Appendix.

\section{Conclusion}
\looseness=-1
In this work, we proposed \texttt{Fair-IJ}, an infinitesimal jackknife-based approach to mitigate the influence of biased training data points without refitting the model. 
Our approach is limited to settings where the assumptions listed in \cref{sec:method} hold. Also, care must be taken to choose an appropriate fairness criterion and its differentiable surrogate for any given application to avoid unwanted consequences. We restricted our analysis to binary classification, since this is by far the most common setup in the fairness literature, but our approach applies to any metric $M$ that is once differentiable in the model parameters and any training loss that is twice differentiable in the parameters. This includes standard approaches to multiclass classification and regression.

Future work includes extending our approach to black-box models where the gradients are inaccessible and incorporating higher-order Taylor approximations to improve the accuracy of the influence functions. We hope that our method further encourages researchers and practitioners in studying and applying bias mitigation to diverse and complex models and datasets.

\bibliography{references}
\bibliographystyle{plain}

\newpage
\appendix
\section*{Appendix}

\section{Loss aware Fair-IJ} 
\label{sec:proofs}
While this construction of $\wfair$ reduces the fairness metric it pays no heed to the original loss, $\ell$, and may lead to classifiers that are less accurate than $\hat{\bm\theta}$. To account for $\ell$ we  additionally compute $\mathcal{I}_{\ell, n}$ the influence of each training instance on the validation loss, $L_{\valid}(\bm\theta) = \frac{1}{N_{\text{val}}}\sum_{n=1}^{N_{\text{val}}} \ell(y_n, \model{n})$ and set, 
\begin{equation}w_n^{\text{fair}} = 1 -  \ind{\mathcal{I}_{M,n} >0}\ind{\mathcal{I}_{\ell,n} >0},
\label{eq:wfair}
\end{equation}
i.e., we zero out those coordinates of $\wfair$ that correspond to training instances with a positive influence on both the fairness metric and the loss $\ell$. 
Finally, defining $\train_{-} = \{\mathbf{z}_n \mid \mathbf{z}_n \in \train \text{ and }\mathcal{I}_{M, n} > 0 \And \mathcal{I}_{\ell, n} > 0 \}$, we arrive at the post-hoc mitigated classifier by plugging in the zeroed-out $\wfair$ in \cref{eq:ij},
\begin{equation}
    \thetafair = \hat{\bm\theta} + \sum_{m \in \train_{-}} H^{-1}g_m.
    \label{eq:thetafair_lossaware}
\end{equation}

\section{IHVP-WoodFisher}
\label{sec:proofHVP}
In this section, we show that the coupled recurrences, that we refer to as \emph{IHVP-WoodFisher}, computes the Inverse-Hessian Vector Product (IHVP). We begin by restating \cref{prop:IHVP},
\begin{prop}
Let $\vo_{1} = \nabla_\theta \ell(\mathbf{z}_{1})$, $\vk_{1} = \mathbf{v}$, and $N$ denote the number of training instances. The Hessian-vector product $H^{-1}\mathbf{v}$ is approximated by iterating through the IHVP-WoodFisher recurrence in \cref{eq:IHVP-WoodFisher} and computing $\vk_{N}$.
\end{prop}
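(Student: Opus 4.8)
The plan is to reduce the claim to the observation that the coupled recurrence in \cref{eq:IHVP-WoodFisher} is nothing but the matrix recurrence of \cref{eq:inverse_fisher} applied to the single vector $\mathbf{v}$, with every $\mathcal{O}(D^2)$ intermediate quantity replaced by the two $\mathcal{O}(D)$ vectors $\vo_n$ and $\vk_n$. Since \cref{eq:inverse_fisher} is precisely the WoodFisher approximation $\hat{\mathbf{H}}_N^{-1}\approx\mathbf{H}^{-1}$, it suffices to show that iterating \cref{eq:IHVP-WoodFisher} returns $\hat{\mathbf{H}}_N^{-1}\mathbf{v}$ up to that same approximation. The engine throughout is the Sherman--Morrison identity: for invertible $\mathbf{A}$ and a vector $\mathbf{g}$, $(\mathbf{A}+\tfrac{1}{N}\mathbf{g}\mathbf{g}^\top)^{-1}=\mathbf{A}^{-1}-\frac{\mathbf{A}^{-1}\mathbf{g}\mathbf{g}^\top\mathbf{A}^{-1}}{N+\mathbf{g}^\top\mathbf{A}^{-1}\mathbf{g}}$, which is exactly the rank-one step taking $\hat{\mathbf{H}}_n^{-1}$ to $\hat{\mathbf{H}}_{n+1}^{-1}$ in \cref{eq:inverse_fisher}.

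First I would carry out an induction on $n$ with two simultaneous loop invariants: $\vk_n=\hat{\mathbf{H}}_n^{-1}\mathbf{v}$, the running inverse-Hessian--vector product of interest, and $\vo_n=\hat{\mathbf{H}}_n^{-1}\,\nabla_\theta\ell(\mathbf{z}_{n+1})$, the vector that the next rank-one correction requires. The base case follows from $\hat{\mathbf{H}}_0^{-1}=\lambda^{-1}\mathbf{I}$ together with the stated initializations $\vk_1=\mathbf{v}$ and $\vo_1=\nabla_\theta\ell(\mathbf{z}_1)$. For the inductive step I would left-multiply the Sherman--Morrison update of $\hat{\mathbf{H}}_{n+1}^{-1}$ by $\mathbf{v}$ and collect terms: the matrix $\hat{\mathbf{H}}_n^{-1}$ only ever appears contracted against $\mathbf{v}$ or against $\nabla_\theta\ell(\mathbf{z}_{n+1})$, so every surviving quantity is one of the two maintained vectors or one of the two scalars $\nabla_\theta\ell(\mathbf{z}_{n+1})^\top\vk_n$ and $\nabla_\theta\ell(\mathbf{z}_{n+1})^\top\vo_n$. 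Substituting these reproduces the $\vk$-recurrence of \cref{eq:IHVP-WoodFisher} verbatim, and running the identical contraction against the relevant gradient reproduces the $\vo$-recurrence. Evaluating at $n=N$ then gives $\vk_N=\hat{\mathbf{H}}_N^{-1}\mathbf{v}\approx\mathbf{H}^{-1}\mathbf{v}$.

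The step I expect to be the crux is maintaining the auxiliary invariant for $\vo_n$: the rank-one direction needed at iteration $n+1$ is $\hat{\mathbf{H}}_n^{-1}$ applied to a \emph{fresh} gradient, and the entire purpose of the coupled scheme is to avoid ever materializing $\hat{\mathbf{H}}_n^{-1}$ to form this product. I would therefore be careful to track exactly which gradient enters at which index, and to verify that a single forward pass through the data keeps $\vo_n$ synchronized with $\vk_n$ at the same ``level'' of the recurrence; any index slip here destroys the telescoping and is where the argument is most delicate. Finally, I would emphasize that the equality is only up to the WoodFisher approximation already inherited from \cref{eq:inverse_fisher}, consistent with the word ``approximated'' in the statement, and note that since only $\vo_n$ and $\vk_n$ are ever stored, the scheme achieves the advertised $\mathcal{O}(D)$ memory.
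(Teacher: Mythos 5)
Your overall strategy—multiply the WoodFisher matrix recurrence of \cref{eq:inverse_fisher} by $\mathbf{v}$ (and by a gradient) so that only $\mathcal{O}(D)$ quantities are ever stored—is the same as the paper's, but your proof as stated has a genuine gap at exactly the point you flag as the crux, and that gap cannot be closed the way you hope. Your second loop invariant, $\vo_n=\hat{\mathbf{H}}_n^{-1}\nabla_\theta\ell(\mathbf{z}_{n+1})$, is not preserved by the $\vo$-recurrence in \cref{eq:IHVP-WoodFisher}: contracting the Sherman--Morrison update of $\hat{\mathbf{H}}_{n+1}^{-1}$ against the gradient available at step $n$ produces $\hat{\mathbf{H}}_{n+1}^{-1}\nabla_\theta\ell(\mathbf{z}_{n+1})$, i.e.\ the \emph{old} gradient, whereas the invariant at level $n+1$ requires $\hat{\mathbf{H}}_{n+1}^{-1}\nabla_\theta\ell(\mathbf{z}_{n+2})$, i.e.\ a \emph{fresh} gradient. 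Producing that fresh product exactly is precisely what an $\mathcal{O}(D)$ forward pass cannot do (it would require re-running the whole recurrence seeded with the new gradient, or materializing the matrix), so no amount of careful index bookkeeping makes the induction exact, and the claim that ``running the identical contraction against the relevant gradient reproduces the $\vo$-recurrence'' is false as an exact statement.

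The paper's proof closes this hole not by an exact induction but by an explicit additional assumption: it substitutes $\hat{\mathbf{H}}_n^{-1}\nabla_\theta\ell(\mathbf{z}_{n+1})$ by $\vo_n$ \emph{and assumes that consecutive per-example gradients $\nabla_\theta\ell(\mathbf{z}_{n+1})$ and $\nabla_\theta\ell(\mathbf{z}_{n+2})$ are close}, so the $\vo$-recurrence (and hence $\vk_N$) approximates the quantities of interest only up to this extra error, on top of the empirical-Fisher/WoodFisher approximation to $\mathbf{H}^{-1}$. Your writeup attributes the entire word ``approximated'' in the statement to the WoodFisher step inherited from \cref{eq:inverse_fisher}; that misses this second, structurally different source of approximation, which is the actual content needed to justify the coupled scheme. (A minor additional point: the base case $\vk_1=\mathbf{v}$, $\vo_1=\nabla_\theta\ell(\mathbf{z}_1)$ matches $\hat{\mathbf{H}}_0^{-1}=\lambda^{-1}\mathbf{I}$ only up to the factor $\lambda^{-1}$, consistent with the paper's later remark that the estimate is only determined up to a scaling that is tuned on validation data.) To repair your proof, keep the two-invariant structure but state both invariants as approximate, introduce the gradient-closeness assumption where the fresh gradient is needed, and conclude that $\vk_N\approx\hat{\mathbf{H}}_N^{-1}\mathbf{v}\approx\mathbf{H}^{-1}\mathbf{v}$ under those assumptions.
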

\begin{proof}

The WoodFisher approximation provides us with the following recurrence relation for estimating the inverse of the Hessian,
\begin{equation}
\label{eq:inverse_fisher_short}
\begin{split}
     H_{n+1}^{-1} & = H_{n}^{-1} - \frac{H_{ n}^{-1}\nabla_{\theta} \ell(\mathbf{z}_{n+1})\nabla_\theta \ell(\mathbf{z}_{n+1})^\top H_{n}^{-1}}{N + \nabla_\theta \ell(\mathbf{z}_{n+1})^\top H_{n}^{-1}\nabla_{\theta}\ell(\mathbf{z}_{n+1})}
\end{split}
\end{equation}

with, $H_{0}^{-1} = \lambda^{-1}I_{D}$, and $\lambda$ is small positive scalar.

Multiplying, both sides by $\nabla_{\theta} \ell(\mathbf{z}_{n+1})$, we get,

\begin{equation}
\begin{split}
     H_{n+1}^{-1}\nabla_{\theta} \ell(\mathbf{z}_{n+1}) & = H_{n}^{-1}\nabla_{\theta} \ell(\mathbf{z}_{n+1}) - \frac{H_{ n}^{-1}\nabla_{\theta} \ell(\mathbf{z}_{n+1})\nabla_\theta \ell(\mathbf{z}_{n+1})^\top H_{n}^{-1}\nabla_{\theta} \ell(\mathbf{z}_{n+1})}{N + \nabla_\theta \ell(\mathbf{z}_{n+1})^\top H_{n}^{-1}\nabla_{\theta}\ell(\mathbf{z}_{n+1})}
\end{split}
\end{equation}

By substituting $H_{ n}^{-1}\nabla_{\theta} \ell(\mathbf{z}_{n+1})$ with $\vo_{n}$ and assuming $\nabla_{\theta} \ell(\mathbf{z}_{n+1})$ and $\nabla_{\theta} \ell(\mathbf{z}_{n+2})$ are close, we construct the following recurrence relation  

\begin{equation}
\begin{split}
     \vo_{n+1} & = \vo_{n} - \frac{\vo_{n}\nabla_\theta \ell(\mathbf{z}_{n+1})^\top \vo_{n}}{N + \nabla_\theta \ell(\mathbf{z}_{n+1})^\top \vo_{n}}
\end{split}
\end{equation}

Now, multiplying both sides of Equation~\ref{eq:inverse_fisher_short} by $\mathbf{v}$, gives us the recurrence relation for the IHVP,

\begin{equation}
\begin{split}
     H_{n+1}^{-1}\mathbf{v} & = H_{n}^{-1}\mathbf{v} - \frac{H_{ n}^{-1}\nabla_{\theta} \ell(\mathbf{z}_{n+1})\nabla_\theta \ell(\mathbf{z}_{n+1})^\top H_{n}^{-1}\mathbf{v}}{N + \nabla_\theta \ell(\mathbf{z}_{n+1})^\top H_{n}^{-1}\nabla_{\theta}\ell(\mathbf{z}_{n+1})}
\end{split}
\end{equation}

By substituting $H_{n+1}^{-1}\mathbf{v}$ with $\vk_{n+1}$ and $H_{ n}^{-1}\nabla_{\theta} \ell(\mathbf{z}_{n+1})$ with $\vo_{n}$, we get 

\begin{equation}
\begin{split}
     \vk_{n+1} & = \vk_{n} - \frac{\vo_{n}\nabla_\theta \ell(\mathbf{z}_{n+1})^\top \vk_{n}}{N + \nabla_\theta \ell(\mathbf{z}_{n+1})^\top \vo_{n}}
\end{split}
\end{equation}

Thus, under our assumptions, when $H_{n}^{-1}$ converges to $H^{-1}$, $\vk_{n}$ converges to the IHVP $H^{-1}\mathbf{v}$.

\end{proof}

\section{IHVP-WoodFisher Approximation Accuracy}
\label{sec:app_woodfisher_acc}
In this section, we discuss the approximation accuracy of \textit{IHVP-WoodFisher} and compare it with \textit{IHVP-Neumann}. 

\begin{figure}[h]
    \centering
    \includegraphics[width=\textwidth]{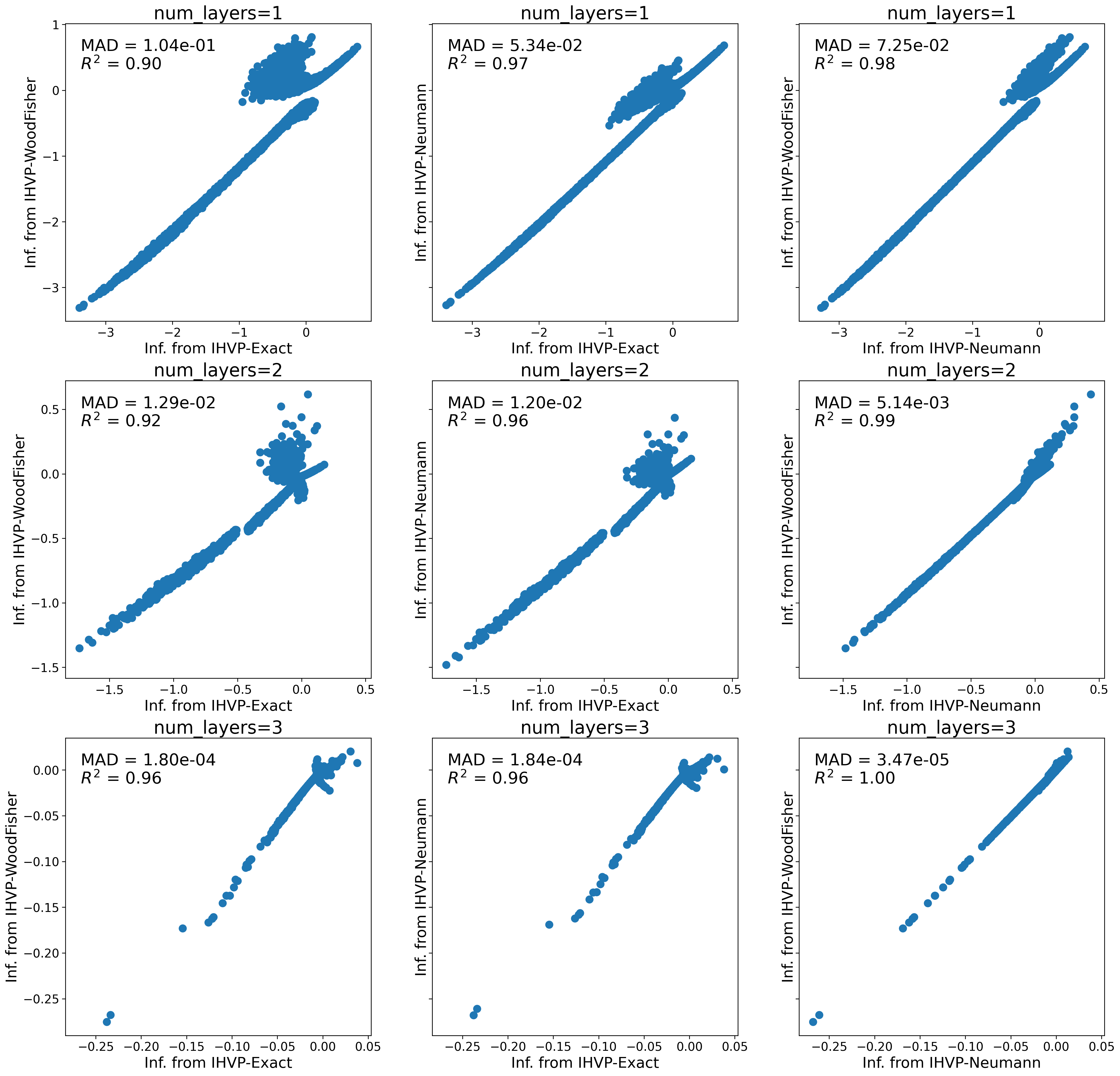}
    \caption{\textbf{IHVP-WoodFisher Approximation Accuracy.} The plots show the accuracy of the \textit{IHVP-WoodFisher} and \textit{IHVP-Neumann} approximations by comparing with influence scores obtained from \textit{IHVP-Exact}. We train models with different number of layers on the linearly separable two moon dataset and report Median Absolute Deviation (MAD) and $R^2$ scores.}
    \label{fig:f_vs_n}
\end{figure}

\textit{IHVP-WoodFisher} relies on the assumption that the empirical Fisher matrix is a good approximation to the Hessian of the loss. The Hessian of the loss is known to converge to the true Fisher matrix,  when: a) the loss of the model used during training can be expressed as negative log likelihood; and b) the model likelihood has converged to the true data likelihood \cite{singh2020woodfisher}. The empirical Fisher matrix does not have convergence guarantees as the true Fisher but, it is computationally cheap and works well in practice as an approximation to the Hessian matrix. This is also seen in our experimental results.

To study the approximation accuracy of \textit{IHVP-WoodFisher} and compare with the \textit{Exact} approach of computing IHVP, we consider a setting where the number of parameters is small; specifically, we generate a linearly separable variant of the two moon dataset consisting of $10000$ points, where each point has $2$ input features and  can belong to one of the two classes. We create at $80-20$ train-test split and train models with depth $1$, $2$, and $3$ to observe the effect of depth. Hidden layers have a fixed width of $5$ units. We use Adam optimizer with learning rate $0.001$. After training, we pick a random point from the test set and compute the influence score of the training instances using \textit{IHVP-WoodFisher} and \textit{IHVP-Neumann} approximations as well as exactly computing the IHVPs. For both approximations we use $1000$ iterations and average over $10$ runs. The \textit{IHVP-Neumann} approximation has an additional hyper-parameter - \textit{scale}. This is to ensure that the Eigenvalues of the Hessian are between $[0,1]$. For \textit{IHVP-Neumann}'s convergence, \textit{scale} has to be greater than the largest Eigenvalue of the Hessian. In these experiments, we set this hyperparameter to $25.0$ which is larger than the largest Eigenvalue we observed for all the models we trained.

In Figure~\ref{fig:f_vs_n} we compare the influence scores and report the Median Absolute Deviation (MAD) and $R^2$ scores. For each depth value, when plotting and computing the metrics we rescale the influence scores from both approximations to match the mean of the \textit{IHVP-Exact}. It can be observed that both \textit{IHVP-Neumann} and \textit{IHVP-WoodFisher} approaches match well with the influence scores obtained from \textit{Exact} computation. The main benefit of \textit{IHVP-WoodFisher} is that it does not require higher-order gradients. Additionally, unlike \textit{IHVP-Neumann}, this method does not require expensive hyperparameter search to rescale the Eigenvalues of Hessian to ensure IHVP computation convergence. 

\begin{algorithm}[t]
   \caption{\texttt{Fair-IJ (slow)} }
   \label{alg:fairij}
\begin{algorithmic}[1]
\State {\bfseries Input:} Pre-trained model parameters $\hat{\bm\theta}$, training set $\train$, loss function $\ell$, a validation set $\valid$ and a smooth surrogate to the fairness metric $b \in \{\Delta\text{DP}, \Delta\text{EO} \}$, $M_{\valid}^b$.
   \State {\bfseries Calculate:} $\mathbf{H}^{-1}g_n$ for each training instance $\mathbf{z}_{n}$ by setting $\vk_1 = g_n$ and iterating through \cref{eq:IHVP-WoodFisher} for B iterations.
   \State {\bfseries Calculate:} the fairness influence $\influence_{b,n}$ of each training instance $\mathbf{z}_{n}$ on $\valid$ using \cref{eq:IDDP} or \cref{eq:IDEO}.
   \State{\bfseries Construct:} the set $\train_{-}$ and denote its cardinality, $|\train_{-}| = K$.
   \State {\bfseries Initialize:} $\thetafair^0 := \hat{\bm\theta}$
   \For{$k \in [1, \ldots, K]$}
   
\State {\bfseries Construct:} $\train_{-}^k = \{\mathbf{z}_{n}\in \train_{-} \mid \influence_{b, n} >  \influence_{b, (K-k)}\}$, where $\influence_{b, (K-k)}$ denotes the $(K-k)^\text{th}$ order statistic of the influence scores $[\influence_{b,1}, \ldots, \influence_{b, K}]$.
   
   
   
   \State {\bfseries Calculate:} $\thetafair^{k}$ by replacing $\train_{-}$ with using $\train^{k}_{-}$ in \cref{eq:thetafair}.
 \State {\bfseries If} $b_{\valid}(\thetafair^k) <  b_{\valid}(\thetafair^{k-1})$ {\bfseries set} $\thetafair := \thetafair^{k}$  {\bfseries else set} $\thetafair := \thetafair^{k-1}$ and break out of the for loop.
   \EndFor
   \State {\bfseries Return:} fair model parameters $\thetafair$.
   \end{algorithmic}
\end{algorithm}

\section{Additional Dataset, Training Details and Results}
\label{sec:app_details}

In Table \ref{tab:datasets}, we provide additional information information about the three datasets -- Adult\footnote{\url{https://archive.ics.uci.edu/ml/datasets/adult}}, Coverage \footnote{\url{https://github.com/zykls/folktables}}, and CivilComments\footnote{\url{https://www.kaggle.com/c/jigsaw-unintended-bias-in-toxicity-classification/data}}. We trained our models on NVIDIA A100 Tensor Core GPUs. In the case of the tabular datasets, $10$ runs with a particular fairness metric took less than 2 hours. This includes training the base model using ERM followed by the application of our \texttt{Fair-IJ} algorithm. Within the algorithm, we search for the best $k$ among $40$ values spread uniformly in the range in the ranges $0-2000$. Similarly, for the IHVP scaling we select the best value among $(0.01, 0.1, 1.0, 2.0, 3.0, 5.0, 10.0)$. This search only requires inference over the validation and hence is relatively inexpensive. The post-processing baselines (\textit{FST} and \textit{HPS}), assume access to a pre-trained model. Similar to our approach they use the validation data to mitigate bias in the pre-trained models.  For the in-processing baselines (\textit{HGR} and \textit{FairMixup}), following standard practice, we train the models on the training set and use the validation set to select the hyper-parameter that determines the strength of the fairness regularizer employed by these methods. In Figure \ref{fig:pareto_errorbar}, we reproduce the Figure \ref{fig:pareto} with error bars for both the accuracy and fairness metrics.

\begin{figure}
\begin{subfigure}[b]{.33\linewidth}
\includegraphics[width=\linewidth]{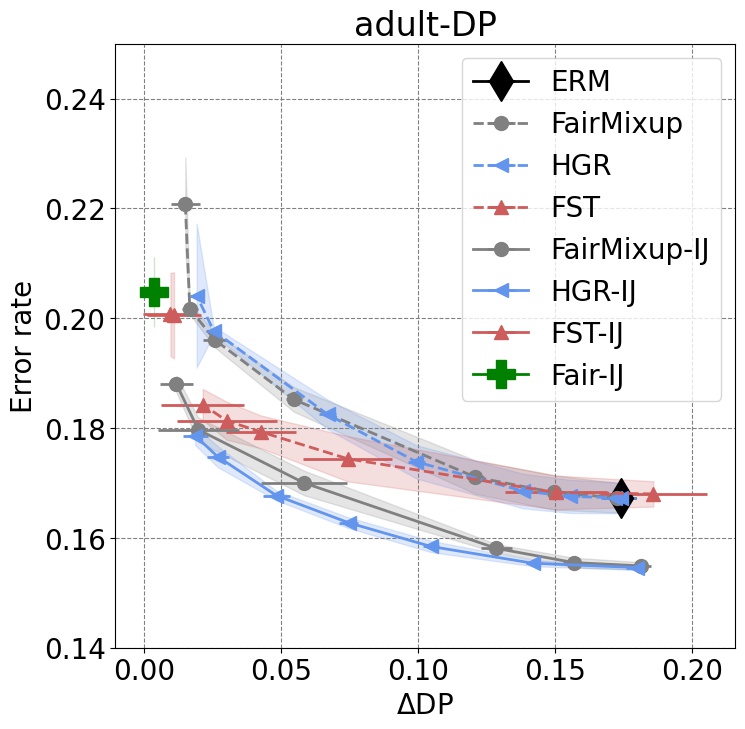}
\end{subfigure}
\begin{subfigure}[b]{.33\linewidth}
\includegraphics[width=\linewidth]{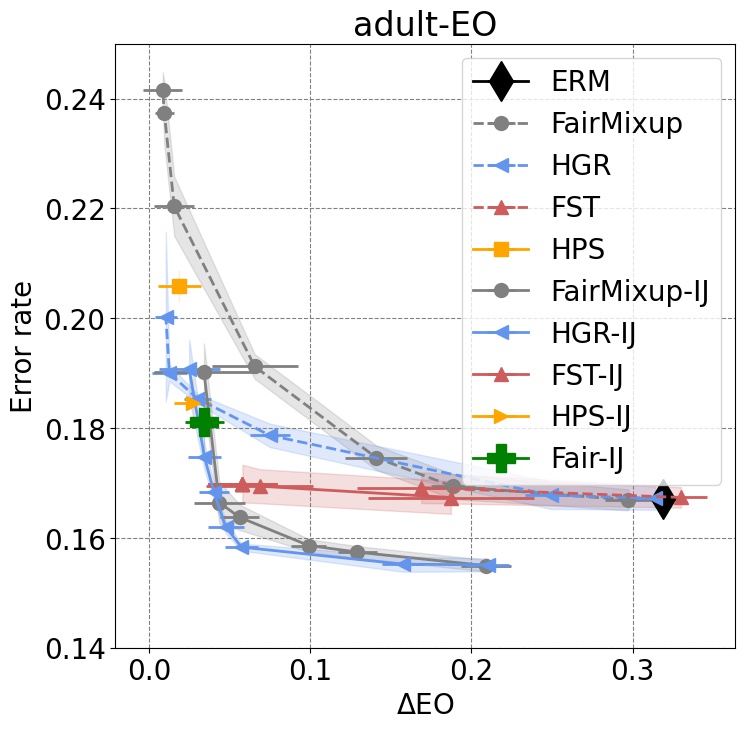}
\end{subfigure}
\begin{subfigure}[b]{.33\linewidth}
\includegraphics[width=\linewidth]{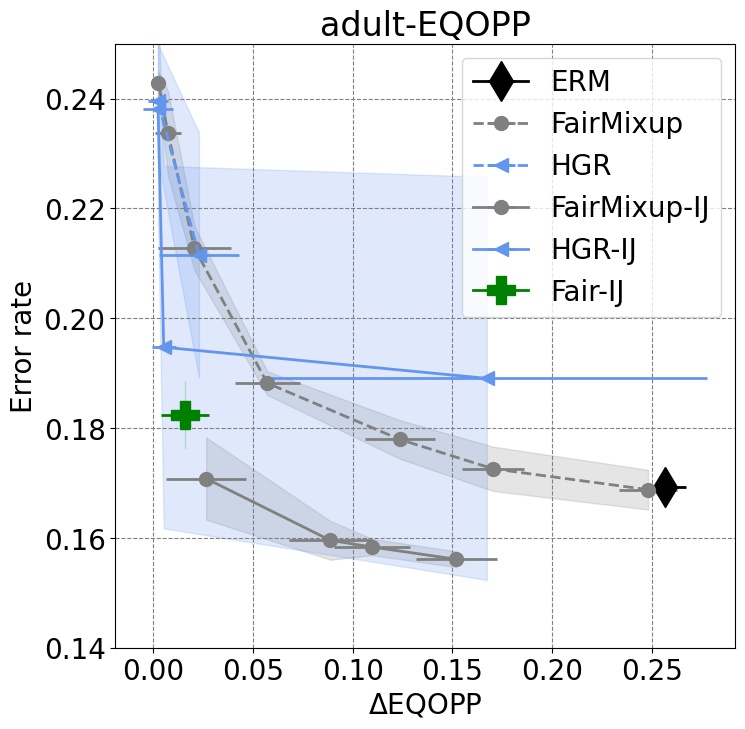}
\end{subfigure}

\begin{subfigure}[b]{.33\linewidth}
\includegraphics[width=\linewidth]{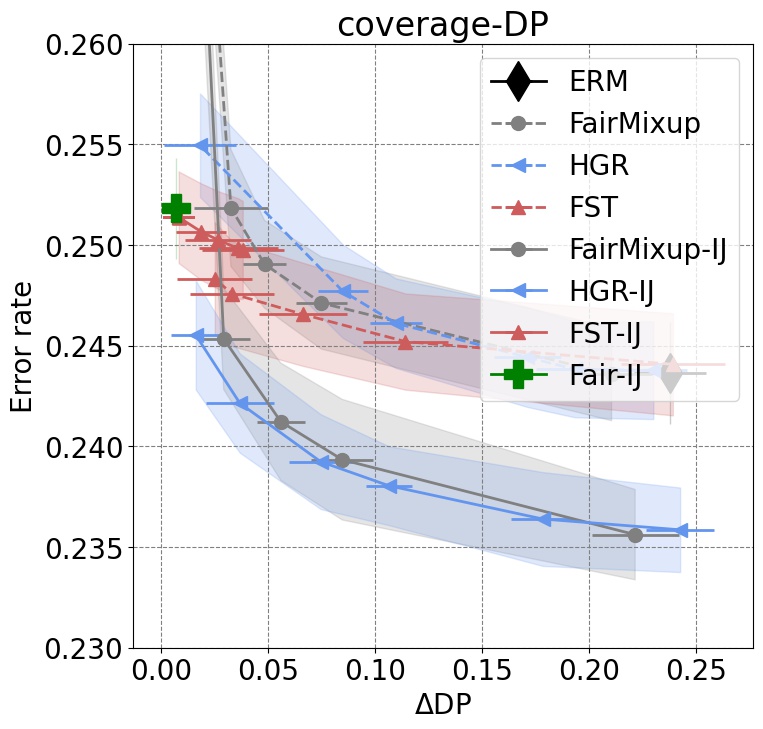}
\end{subfigure}
\begin{subfigure}[b]{.33\linewidth}
\includegraphics[width=\linewidth]{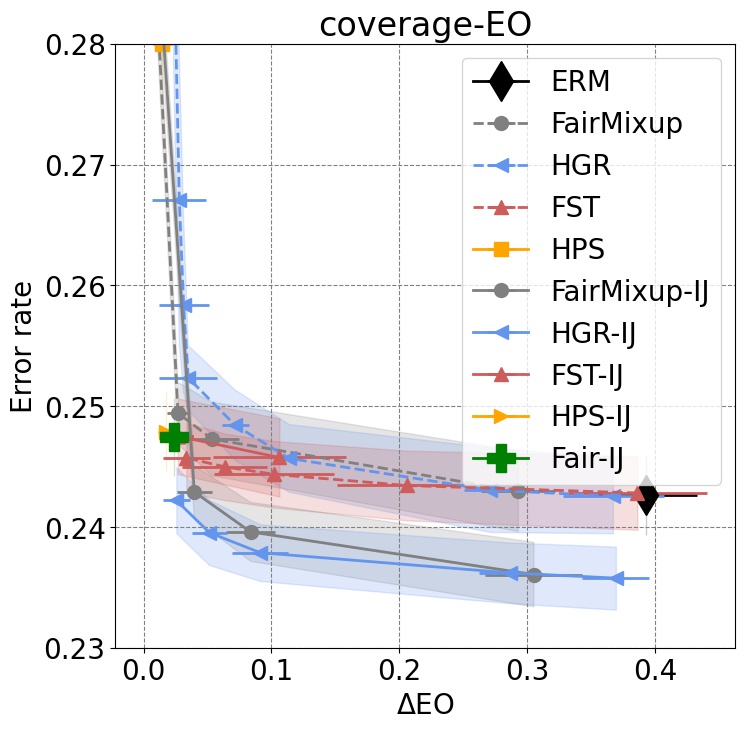}
\end{subfigure}
\begin{subfigure}[b]{.33\linewidth}
\includegraphics[width=\linewidth]{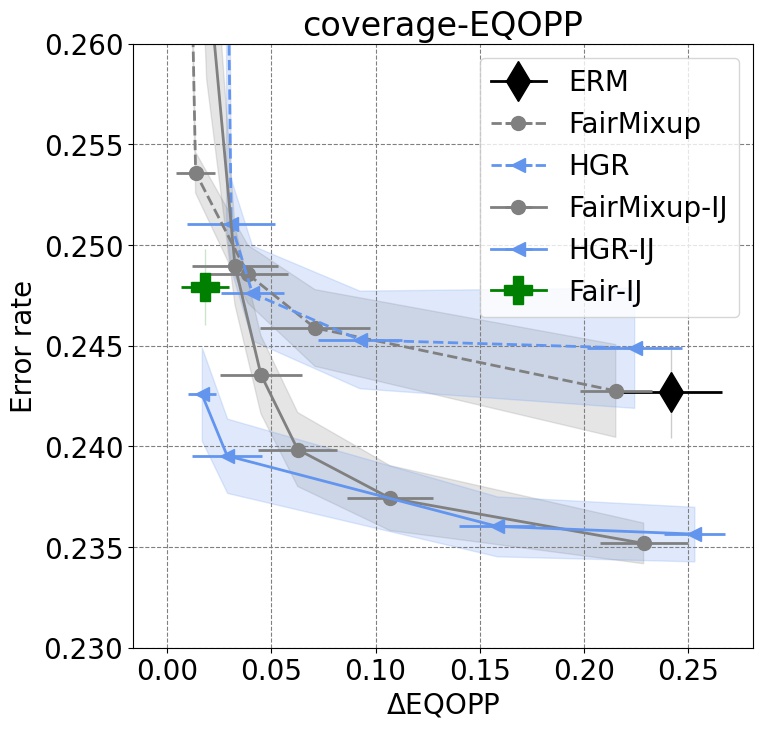}
\end{subfigure}
\caption{Accuracy and fairness Pareto frontier with error bars for the Adult and the Coverage datasets averaged over 10 runs. Points closer to the bottom-left achieve the best fairness/accuracy trade-off.} 
\label{fig:pareto_errorbar}
\end{figure}

\begin{table}[ht]
  \caption{Summary of datasets.}
  \centering
	\begin{tabular}{ccccc}
	\\
    	\toprule
    	\textbf{Dataset} & \textbf{Target} & \textbf{Attribute} & \textbf{Size} \\
    	&&& Train / Valid / Test\\
    	\midrule
    	Adult  & Income & Sex & 21815 / 10746 / 12661\\
    	\midrule
    	Coverage & Health Insurance Coverage & Race & 44168 / 21755 / 32471\\
    	\midrule
    	Civil Comments & Toxicity & Muslim & 269038 / 45180 / 133782\\
    	\bottomrule
  \end{tabular}
 \label{tab:datasets}
\end{table}

\begin{table}[ht]
\small
\centering
\begin{tabular}{lcccc}
\toprule
Model & BA & $\Delta\text{EO}$ & BA & $\Delta\text{DP}$ \\
\midrule
\fairlc-ERM & 58.4$\pm$0.229 & 0.314$\pm$0.026 & 58.4$\pm$0.229 & 0.246$\pm$0.011 \\
\fairnc-ERM & 59.3$\pm$0.025 & 0.326$\pm$0.006 & 59.3$\pm$0.025 & 0.261$\pm$0.003 \\
\midrule
\fairlc-GapReg ($\lambda\!=\!0.2$) & 58.3$\pm$0.000 & 0.198$\pm$0.006 & 58.3$\pm$0.000 & 0.149$\pm$0.004 \\
\fairlc-GapReg ($\lambda\!=\!1.0$) & 57.9$\pm$0.001 & 0.144$\pm$0.010 & 56.9$\pm$0.016 & 0.071$\pm$0.009 \\
\fairnc-GapReg ($\lambda\!=\!0.2$) & 59.0$\pm$0.000 & 0.166$\pm$0.007 & 58.8$\pm$0.005 & 0.136$\pm$0.010 \\
\fairnc-GapReg ($\lambda\!=\!1.0$) & 57.9$\pm$0.011 & 0.144$\pm$0.067 & 58.2$\pm$0.003 & 0.054$\pm$0.028 \\
\midrule
\fairlc-HGR ($\lambda\!=\!0.2$) & 56.9$\pm$0.018 & 0.209$\pm$0.075 & 58.3$\pm$0.000 & 0.213$\pm$0.002 \\
\fairlc-HGR ($\lambda\!=\!1.0$) & 53.0$\pm$0.002 & 0.372$\pm$0.015 & 58.1$\pm$0.000 & 0.149$\pm$0.003 \\
\fairnc-HGR ($\lambda\!=\!0.2$) & 59.0$\pm$0.000 & 0.169$\pm$0.007 & 59.1$\pm$0.003 & 0.240$\pm$0.004 \\
\fairnc-HGR ($\lambda\!=\!1.0$) & 54.0$\pm$0.003 & 0.339$\pm$0.042 & 59.1$\pm$0.000 & 0.191$\pm$0.002 \\
\midrule
\fairlc-\texttt{Fair-IJ} & 58.6$\pm$0.162 & 0.125$\pm$0.011 & 57.1$\pm$0.237 & 0.011$\pm$0.004 \\
\fairnc-\texttt{Fair-IJ} & 59.8$\pm$0.326 & 0.126$\pm$0.011 & 58.6$\pm$0.170 & 0.008$\pm$0.004 \\
\midrule
\end{tabular}
\quad \\
\begin{tabular}{lcccc}
\toprule
Model & BA & $\Delta\text{EO}$ & BA & $\Delta\text{DP}$ \\
\midrule
\fairtt{4}-ERM  & 57.5$\pm$0.794 & 0.360$\pm$0.090 & 57.5$\pm$0.794 & 0.268$\pm$0.037  \\
\fairtt{8}-ERM  & 58.2$\pm$0.179 & 0.317$\pm$0.060 & 58.2$\pm$0.179 & 0.254$\pm$0.032 \\
\fairtt{10}-ERM & 58.3$\pm$0.842 & 0.348$\pm$0.070 & 58.3$\pm$0.842 & 0.234$\pm$0.043 \\
\fairtt{4}-\texttt{Fair-IJ} &  56.0$\pm$0.922  &  0.102$\pm$0.035  & 55.2$\pm$1.202 & 0.042$\pm$0.060  \\
\fairtt{8}-\texttt{Fair-IJ} &  56.5$\pm$0.690  &  0.113$\pm$0.040  & 56.4$\pm$0.512 & 0.071$\pm$0.069 \\
\fairtt{10}-\texttt{Fair-IJ} & 57.2$\pm$1.483 &   0.110$\pm$0.045  & 56.6$\pm$1.349 & 0.089$\pm$0.056 \\
\midrule
\end{tabular}
\caption{Comparison between ERM, Gap Regularization, Hirschfeld-Gebelein-Renyi Maximum Correlation Coefficient (HGR) (both for $\lambda=0.2, 1.0$), and \texttt{Fair-IJ} for CivilComments on the sensitive attribute \texttt{MUSLIM}. We report the mean and standard deviation of difference in equality of odds ($\Delta{\text{EO}}$), difference in demographic parity ($\Delta{\text{DP}}$), along with the task balanced accuracy (BA) on 5 different seeds.}

\label{tab:civil_muslim_appendix}

\end{table}

\iftrue
We now provide additional details regarding the experiments on CivilComments datasets.
Table~\ref{tab:civil_muslim_appendix} presents additional results comparing ERM models (built without any fairness adjustment) to models regularized using Gap Regularization (GapReg) from \textit{FairMixup} \cite{chuang2021fair}, Hirschfeld-Gebelein-Rényi Maximum Correlation Coefficient (HGR) dependency measure \cite{mary2019fairness}, and \textit{Fair-IJ}. 
Reported results are for the sensitive attribute \texttt{MUSLIM}. They include Equality of odds ($\Delta$EO), demographic parity ($\Delta$DP), and balanced accuracy (BA) and are the means and standard deviations over 5 training runs, each with a different random seed (i.e. 5 different seeds for each configuration). Each model was built with $100$ epochs of SGD over the training data for a total of 24h of computation time (using a NVIDIA A100 GPU). All reported results were computed on the test dataset for models with the best validation loss over the 100 epochs of training (models being validated at the end of each epoch). 
For loss-regularized models using GapReg and HGR, values of $\lambda=1.0$ and $\lambda=0.2$ were used to add the regularizer term to the training loss; both sets of results are given in Table~\ref{tab:civil_muslim_appendix}. 
Inference on the test dataset is quite fast (3 minutes) on NVIDIA A100 GPUs.
Results for both \fairlc~(Linear Classifier) and \fairnc~(Non-linear Classifier) are provided for ERM, GapReg, HGR, and \texttt{Fair-IJ}.

Both our \fairlc~and \fairnc~architectures are defined based on different classification layer(s) on top of a pre-trained BERT model. In all our experiments, we use BERT$_{\text{base}}$ model. While \fairlc~uses just a single dense layer on top of the pooled vector from BERT representations, \fairnc~has multiple dense layers on top of the pooled output. For the latter, we use two dense layers with hidden sizes of 768 and 128 intertwined with ReLU non-linearities. As described in \ref{sec:civilcomments}, we also provide quantitative results on a variant of \fairlc, referred as \fairtt{N}, which uses virtual tokens, where \texttt{N} refers to number of trigger tokens. In this setup, we introduce a new parameterized embedding, called \textit{trigger} embeddings, which are learned during the training. Similar to methods introduced in \cite{lester2021power} and \cite{li2021prefix}, we add trigger tokens to each sequence during the training. In our quantitative analysis we use variants with 4, 8, and 10 trigger tokens which are referred as \fairtt{4}, \fairtt{8}, and \fairtt{10} respectively in Table~\ref{tab:civil_muslim_appendix} and Table~\ref{tab:civil_muslim}. 
We fine-tune these models with a maximum epochs of $100$ and choose the best model based on the validation loss over the validation set. Similar to the case of tabular datasets, we apply \texttt{Fair-IJ} algorithm with same range of $k$ and IHVP scaling.


\begin{table}[ht]
\small
\centering
\begin{tabular}{lcccc}
\toprule
Model & BA & $\Delta\text{EO}$ & BA & $\Delta\text{DP}$ \\
\midrule
\fairttttfive{4}-ERM & 59.9  &  0.150 &  59.9  &  0.170 \\
\fairttttfive{4}-\texttt{Fair-IJ} & 52.8 &  0.019 &  55.2 & 0.008     \\
\fairttttfive{8}-ERM & 59.1 & 0.141 &  59.1  & 0.157  \\
\fairttttfive{8}-\texttt{Fair-IJ} & 52.6 &  0.019 &  56.4 & 0.002   \\
\fairttttfive{10}-ERM & 59.3    & 0.150 &  59.3 & 0.158   \\
\fairttttfive{10}-\texttt{Fair-IJ} & 54.5 & 0.027 &  54.9 & 0.004 \\
\midrule
\end{tabular}
\caption{Comparison between ERM and \texttt{Fair-IJ} for CivilComments on the sensitive attribute \texttt{MUSLIM} when we use pre-trained T5 model. We report the difference in equality of odds ($\Delta{\text{EO}}$), difference in demographic parity ($\Delta{\text{DP}}$), along with the task balanced accuracy (BA).}

\label{tab:civil_muslim_t5}

\end{table}

Further to validate our approach on larger models, we performed additional experiments (Table \ref{tab:civil_muslim_t5}) with a much larger transformer model \texttt{T5}, on the CivilComments dataset with the sensitive attribute set to “Muslim” (i.e., the same setup as \ref{sec:civilcomments}). Our results show similar trends to the experiments with BERT. \texttt{Fair-IJ} consistently decreases the  fairness disparity between groups over the empirical risk minimization solution.

\end{document}